\documentclass{article}

\usepackage{amsmath,amssymb,amsthm}

\usepackage{graphicx}
\usepackage{booktabs}
\usepackage{multirow}
\usepackage{float}
\usepackage{threeparttable}

\usepackage{tikz}
\usetikzlibrary{positioning,arrows.meta,backgrounds,fit}

\usepackage{microtype}
\usepackage{url}
\usepackage{hyperref}
\hypersetup{
  pdftitle={When LLM Judge Scores Look Good but Best-of-N Decisions Fail},
  pdfauthor={Eddie Landesberg},
  hidelinks
}
\usepackage[capitalize,noabbrev]{cleveref}
\usepackage{enumitem}

\usepackage[margin=1in]{geometry}

\newtheorem{theorem}{Theorem}
\newtheorem{lemma}[theorem]{Lemma}
\newtheorem{proposition}[theorem]{Proposition}
\newtheorem{corollary}[theorem]{Corollary}
\theoremstyle{definition}
\newtheorem{definition}{Definition}

\theoremstyle{remark}

\DeclareMathOperator*{\argmax}{arg\,max}

\DeclareMathOperator{\Corr}{Corr}
\DeclareMathOperator{\Var}{Var}
\DeclareMathOperator{\Cov}{Cov}
\DeclareMathOperator{\E}{\mathbb{E}}
\DeclareMathOperator{\Prob}{\mathbb{P}}

\newcommand{\X}{\mathcal{X}}

\newcommand{\rwithin}{r_{\text{within}}}

\newcommand{\pnt}{p_{\text{nt}}}
\newcommand{\peff}{p_{\text{eff}}}

\newcommand{\tauwithin}{\bar{\tau}_{\text{within}}}

\graphicspath{{figures/}}

\title{When LLM Judge Scores Look Good but Best-of-N Decisions Fail}

\author{Eddie Landesberg}

\date{}

\begin{document}
\maketitle

\begin{abstract}
Large language models are often used as judges to score candidate responses, then validated with a single global metric such as correlation with reference labels. This can be misleading when the real deployment task is best-of-$n$ selection within a prompt.

In a 5,000-prompt best-of-4 benchmark from Chatbot Arena, a judge with moderate global correlation ($r = 0.47$) captures only 21.0\% of the improvement that perfect selection would achieve over random choice. The gap arises because global agreement is driven largely by prompt-level baseline effects, while selection depends on within-prompt ranking: within-prompt correlation is only $\rwithin = 0.27$, and coarse pointwise scoring creates ties in 67\% of pairwise comparisons.

In a matched-pair best-of-2 audit, explicit pairwise judging recovers much of this lost signal, raising recovery from 21.1\% to 61.2\%. For judge-based selection, the relevant audit should report within-prompt signal, tie rates, and recovery/top-1 accuracy, not global agreement alone.
\end{abstract}

\section{Introduction}
\label{sec:introduction}

Practitioners increasingly use reward models and LLM judges for best-of-$n$ selection, reranking, and model iteration \cite{christiano2017deep,stiennon2020learning,ouyang2022training,nakano2021webgpt}. The common sanity check is a single global metric such as correlation, average error, or pairwise win-rate. If that number looks decent (say, $r \approx 0.5$), teams often assume the judge is safe to optimize against. That assumption can fail.

A judge can have moderate global agreement yet still be weak at the real task: choosing the best candidate \emph{for a specific prompt}. In our study, a judge with $r = 0.47$ achieves only 21.0\% recovery, capturing just 21.0\% of the gain that oracle-optimal selection would provide over random choice (\Cref{tab:main-results}).

If you use judge scores for reranking or best-of-$n$ selection within a prompt, this is the failure mode that matters. If your goal is only model-level benchmarking across many prompts, global metrics can still be appropriate.

The root cause is straightforward: global metrics mix prompt-level baseline agreement with within-prompt ranking. Selection needs the second, but global correlation is often dominated by the first. Coarse score bins make this worse: about 67\% of pairwise comparisons are ties (\Cref{fig:tie-structure}).

The same mismatch can appear with binary pairwise labels too; \Cref{sec:metrics} makes that decomposition explicit.

Prompt difficulty is just one instance of this mechanism; any context-level factor that moves all candidates together can inflate global agreement. We treat the core question as one of \emph{within-prompt decision validity}: can a judge identify the better candidate for a specific prompt?

\paragraph{Scope.} Our main results come from a large cross-policy benchmark (5,000 prompts). We also report a smaller within-policy fresh-draw pilot (24 prompts) to match the common deployment setup. The analysis is limited to one-step, fixed-policy proxy use: given a fixed judge, does proxy-based selection improve within-prompt decisions? Repeated optimization under policy shift is outside scope.

\paragraph{Contributions.} We make four practical contributions:
\begin{enumerate}[leftmargin=*]
    \item \textbf{Decision-centric audit}: We measure recovery and top-1 accuracy, not only global agreement.
    \item \textbf{Within-vs-between decomposition}: We separate baseline agreement from true within-prompt ranking signal.
    \item \textbf{Tie mechanism and pairwise audit}: We show how coarse pointwise scores create ties and test pairwise judging as a head-to-head fix.
    \item \textbf{Cross-judge replication and deployment thresholds}: We replicate the pattern across judge families and translate within-prompt signal into rough recovery targets.
\end{enumerate}

We report routing and efficient-estimation analyses as secondary extensions. Additional PPE-MATH and LLM-to-LLM evaluations appear in the appendix (\Cref{app:extra-generalization}). We return to the surrogate-validity framing only later, in the discussion, once the operational problem is clear.

\section{Setting and Task}
\label{sec:setting}

We study best-of-$n$ selection: given a prompt $x$ and $n$ candidate responses, select the response that maximizes oracle utility. Our strongest empirical claim is about fixed-judge inference-time reranking: sample several candidates for one prompt, score them, and pick one. Verifier-style search is directly analogous. RLHF-style training is only a mechanistic implication here, since prompt-conditional reward differences drive optimization \cite{christiano2017deep,ouyang2022training,rafailov2023direct}; we do not measure training dynamics directly. This deployment focus is increasingly reflected in evaluation benchmarks as well: recent work argues that listwise judge evaluation better matches optimization scenarios than pairwise-only meta-evaluation \cite{wen2026ifrewardbench}.

Our main analysis uses a cross-policy candidate set (responses from different policies). In Appendix~\ref{app:extra-generalization}, we also report a within-policy fresh-draw audit (multiple samples from one generator), which matches common deployment practice. The core requirement is the same in both settings: strong within-prompt ranking signal.

Best-of-$n$ is useful when candidate quality varies meaningfully within a prompt and the selector can detect those differences. It is less useful when candidates are nearly identical, judge signal is weak or tie-heavy, or latency budgets force single-shot responses. This paper focuses on that boundary.

For one prompt, suppose four candidates have oracle quality 0.90, 0.70, 0.69, and 0.68. Oracle best-of-4 picks 0.90. If the judge ties all four at the top, deployed selection is effectively random, with expected quality about 0.74. The global dataset-level correlation can still look good if this same judge tracks easy prompts well. Best-of-$n$ utility is determined by this local within-prompt separation, not by aggregate agreement alone.

Let $x \in \X$ denote a prompt and $i \in \{1, \ldots, n\}$ index candidate responses (from either different policies or multiple draws from one policy). Each response receives a judge score $S_{x,i} \in [0, 100]$ (normalized to $[0, 1]$ for analysis) and a reference label $O_{x,i} \in [0, 1]$.

The decision problem is to select candidate $i^* = \argmax_i S_{x,i}$ (random tie-break among equal maxima) and evaluate by oracle utility $O_{x,i^*}$.

We compare three selection strategies:
\begin{itemize}[leftmargin=*]
    \item Oracle-optimal: $\argmax_i O_{x,i}$ (best achievable)
    \item Random: uniform selection (baseline)
    \item Judge-greedy: $\argmax_i S_{x,i}$ (deployed policy)
\end{itemize}

Best-of-$n$ relies on relative ordering within prompt, not absolute calibration. A judge can look reasonable on average yet still fail at the one decision the deployment actually needs.

\section{Metrics: From ``Looks Correlated'' to ``Decision Useful''}
\label{sec:metrics}

Readers do not need every symbol in this section to follow the paper. The main path uses four quantities: global $r$ (the usual sanity check), within-prompt $r$ (the key signal diagnostic), recovery, and top-1 accuracy. The remaining metrics are supporting diagnostics.

\subsection{Headline Metric (What People Usually Report)}

\paragraph{Global correlation.} $r = \Corr(S, O)$ across all (prompt, candidate) pairs.
This standard sanity check mostly reflects agreement on context-level baseline effects and can exhibit Simpson/ecological-fallacy effects \cite{simpson1951interpretation,robinson1950ecological}. It does \emph{not} directly answer whether the judge picks the right response \emph{within a prompt}, because it conflates between-context and within-prompt association.

\subsection{Decision Validity (What Optimization Needs)}

For deployment, recovery and top-1 accuracy are the two end metrics. The other quantities in this section help diagnose why those numbers are high or low.

\paragraph{Top-1 accuracy (PCS$_n$).} $\Prob(\argmax_i S_{x,i} = \argmax_i O_{x,i})$ with random tie-breaking for tied maxima, corresponding to probability of correct selection (PCS) in ranking-and-selection terms \cite{bechhofer1954single}.

Plain-language: how often the judge picks the oracle-best candidate.

\paragraph{Recovery rate.}
\begin{equation}
\text{Recovery} = \frac{\E[O_{\text{judge}}] - \E[O_{\text{random}}]}{\E[O_{\text{oracle}}] - \E[O_{\text{random}}]}
\end{equation}

Recovery $= 0\%$ means judge is no better than random; $100\%$ means judge matches oracle.

\paragraph{Pairwise sign agreement.} $\Prob(\text{sign}(\Delta S) = \text{sign}(\Delta O) \mid \Delta S \neq 0, \Delta O \neq 0)$

\paragraph{Within-prompt Kendall $\tau$.} A rank-based complement to sign agreement:
\begin{equation}
\tauwithin = \frac{1}{N}\sum_{x=1}^{N} \tau_x^{(b)}
\end{equation}
where $\tau_x^{(b)}$ is Kendall's $\tau$-b for prompt $x$, handling ties appropriately. While sign agreement evaluates pairwise decisions, $\tauwithin$ captures whether the judge recovers the \emph{full ranking} within each prompt.

We treat pairwise sign agreement and Kendall $\tau$ as supporting diagnostics, not as standalone deployment targets. If you only remember one question from this section, it is: does the judge improve the final selection decision over random choice?

\subsection{Supporting Tie Diagnostics}

With discrete judge outputs, ties dominate. We therefore report both conditional agreement ($\pnt$ on non-tied pairs) and tie-aware agreement ($\peff$, counting unresolved ties as 50\%). In practice, $\pnt$ can look reasonable while $\peff$ remains near chance when tie rates are high.

\paragraph{Why tie-aware metrics matter.} If the judge often gives identical top scores, deployment falls back to random tie-breaking. Tie-aware metrics make this explicit.

\subsection{Within-Between Decomposition}

To formalize the gap between global agreement and local choice, we decompose scores into prompt effects and residuals:
\begin{align}
S_{x,i} &= \mu^S_x + \varepsilon^S_{x,i} \\
O_{x,i} &= \mu^O_x + \varepsilon^O_{x,i}
\end{align}
where $\mu^S_x$ and $\mu^O_x$ are context-level means (capturing baseline effects; in this dataset, prompt-level effects), and $\varepsilon^S_{x,i}$ and $\varepsilon^O_{x,i}$ are candidate-specific deviations. Global $r$ reflects correlation of both $\mu$ (baseline effects) and $\varepsilon$ (candidate quality). Optimization depends only on $\varepsilon$.

\paragraph{Binary/pairwise version (same mechanism).}
The same logic holds when labels are binary choices rather than continuous scores. Aggregate agreement still splits into a between-context term and a within-context term. Let $J_{x,a,b}, O_{x,a,b}\in\{0,1\}$ indicate whether candidate $a$ is preferred to $b$ for prompt $x$ by the judge and oracle, respectively. By the law of total covariance:
\begin{equation}
\Cov(J,O)=\Cov(\E[J\mid X],\E[O\mid X]) + \E\!\left[\Cov(J,O\mid X)\right].
\end{equation}
The first term is between-context coupling (baseline effects); the second is within-context directional signal.
Similarly, for a binary oracle:
\begin{equation}
\Var(O)=\Var\!\left(\Pr(O=1\mid X)\right) + \E\!\left[\Pr(O=1\mid X)\left(1-\Pr(O=1\mid X)\right)\right].
\end{equation}
So even with binary labels, aggregate agreement can look strong when context-level baselines dominate, while within-prompt ordering remains weak.

\paragraph{Intuition.}
Suppose many prompts are ``easy'' (one option is obviously better): judge and oracle agree there, inflating global agreement. But best-of-$n$ utility is determined by ``hard'' prompts where candidates are close. If judge signal is weak on those hard prompts, decision quality stays low despite good aggregate numbers.

\Cref{fig:dag} makes this decomposition explicit. Best-of-$n$ only benefits from the candidate-quality path; global correlation reflects both context-level baseline effects and candidate quality:
\begin{enumerate}[nosep]
  \item \textbf{Baseline path} ($D_x \rightarrow S$ and $D_x \rightarrow O$): Context-level baseline effects $D_x$ affect both judge and oracle scores (here, mostly prompt-level effects). This creates global correlation but does not identify which candidate is best within a prompt.
  \item \textbf{Quality path} ($U_{x,i} \rightarrow O$ and $U_{x,i} \rightarrow S$): Within-prompt quality differences $U_{x,i}$ are what optimization needs. The judge edge is attenuated ($\alpha = 0.18$) and further weakened by coarse score quantization.
\end{enumerate}

%
%

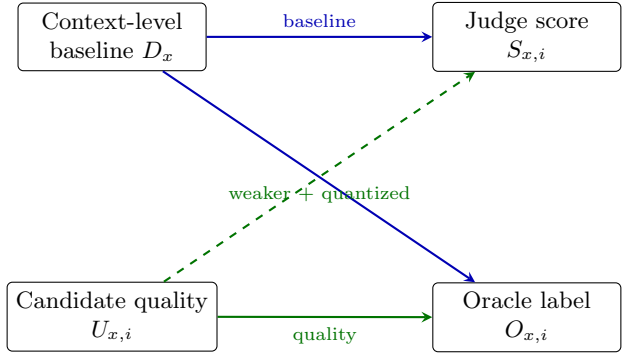
\begin{figure}[t]
  \centering
  \begin{tikzpicture}[
    node distance=2.8cm and 3.0cm,
    every node/.style={font=\small},
    box/.style={rectangle, rounded corners=2pt, draw, minimum width=2.5cm, minimum height=0.9cm, align=center, fill=white},
    baseline/.style={->, >=stealth, thick, blue!70!black},
    quality/.style={->, >=stealth, thick, green!45!black},
    weak/.style={->, >=stealth, thick, dashed, green!45!black}
  ]
    \node[box] (D) {Context-level\\baseline $D_x$};
    \node[box, below=of D] (U) {Candidate quality\\$U_{x,i}$};

    \node[box, right=of D] (S) {Judge score\\$S_{x,i}$};
    \node[box, below=of S] (O) {Oracle label\\$O_{x,i}$};

    \draw[baseline] (D) -- (S) node[midway, above, font=\scriptsize] {baseline};
    \draw[baseline] (D) -- (O);

    \draw[quality] (U) -- (O) node[midway, below, font=\scriptsize] {quality};
    \draw[weak] (U) -- (S) node[midway, below, font=\scriptsize] {weaker + quantized};
  \end{tikzpicture}

  \caption{\textbf{Minimal two-path picture.}
  \emph{Blue solid arrows} are the context-level baseline path ($D_x\!\rightarrow\!S$, $D_x\!\rightarrow\!O$), which is mostly prompt-level in this dataset.
  The \emph{green solid arrow} is the oracle quality link ($U_{x,i}\!\rightarrow\!O$).
  The \emph{green dashed arrow} is the weaker judge quality link ($U_{x,i}\!\rightarrow\!S$), attenuated by noise and score quantization.
  Line style here is conceptual (relative signal strength in this setting), not a statistical significance marker.
  Global correlation uses both paths. Best-of-$n$ depends on the quality path; when $U_{x,i}\!\rightarrow\!S$ is weak, ranking decisions can fail even if global correlation looks acceptable.}
  \label{fig:dag}
\end{figure}

\paragraph{Within-prompt correlation.} $\rwithin = \Corr(\varepsilon^S, \varepsilon^O)$
After removing context-level baseline effects, this measures alignment on relative candidate quality.

\paragraph{Attenuation coefficient.} $\alpha = $ regression slope of $\varepsilon^S$ on $\varepsilon^O$, measuring attenuation of within-prompt oracle signal on the judge-score scale.

\begin{table}[h]
\centering
\caption{Metric definitions, with primary deployment metrics first.}
\label{tab:metric-defs}
\begin{tabular}{lll}
\toprule
\textbf{Metric} & \textbf{Definition} & \textbf{Measures} \\
\midrule
Global $r$ & $\Corr(S, O)$ & Level validity (conflates sources) \\
\text{Top-1 accuracy (PCS$_n$)} & $\Prob(\argmax_i S_{x,i} = \argmax_i O_{x,i})$ & Probability of correct selection \\
$\rwithin$ & $\Corr(\varepsilon^S, \varepsilon^O)$ & Within-prompt coupling \\
$\alpha$ & Slope$(\varepsilon^S \sim \varepsilon^O)$ & Signal attenuation \\
$\pnt$ & $\Prob(\text{agree} \mid \text{both non-tied})$ & Conditional directional accuracy \\
$\peff$ & Tie-adjusted agreement & Decision-aligned accuracy \\
$\tauwithin$ & Mean per-prompt Kendall $\tau$-b & Full ranking quality \\
Recovery & $(O_{\text{judge}} - O_{\text{random}}) / (O_{\text{oracle}} - O_{\text{random}})$ & Decision utility \\
\bottomrule
\end{tabular}
\end{table}

If space is limited in a deployment report, global $r$, within-prompt $r$, recovery, and top-1 accuracy are the minimum set.

\section{Data and Experimental Protocol}
\label{sec:data}

\paragraph{Dataset.} We analyze a 5,000-prompt sample drawn from Chatbot Arena prompts and response rows released through our upstream evaluation pipeline, with protocol context from the MT-Bench/Arena line \cite{zheng2023judging,chiang2024chatbotarena}. For each prompt we have five candidate responses: two from the same Llama-3.3-70B helpful policy (one serves as a duplicate control), one from the same 70B model with a different system prompt, one from a larger Llama-405B helpful policy, and one deliberately bad 70B control. Our main best-of-4 analysis uses the four non-trivial candidates; the bad control is reserved for regime-sensitivity and control analyses. Each released response row carries an already-normalized reference score, \texttt{metadata.oracle\_label} $\in [0,1]$. This is the upstream response-level comparison target bundled with the released rows; in this paper, we take that released scalar as given and use it directly as $O_{x,i}$, with no additional Bradley--Terry refit, vote reconstruction, or post-hoc relabeling.

\paragraph{Model specifications.} The main 5,000-prompt analysis uses a fixed judge snapshot, GPT-5 (gpt-5-2025-08-07), with temperature 0 and 0--100 outputs. We then run a separate multi-judge audit on a shared 500-prompt sample (GPT-5.2, Claude Sonnet 4, GPT-4.1-mini, Gemini-2.5-flash, and Llama-3.3-70B; Section~\ref{sec:multi-judge}), using per-judge complete-case filtering when a score is missing. The main benchmark's reference labels are the stored upstream response-level scores described above. Evaluation prompts are in Appendix~\ref{app:prompts}.

\paragraph{Terminology.} To keep notation compact, we use \emph{oracle} as shorthand for the reference signal in the experiment at hand: these stored response-level labels in the main benchmark, objective correctness on PPE-MATH, and GPT-5.2 labels in the appendix LLM-to-LLM and fresh-draw pilots.

\paragraph{Judge characteristics.} The judge emits only $\sim$20 unique score values (coarse discretization), resulting in a 66.5\% tie rate on similar-candidate comparisons.

\paragraph{Statistical inference.} All confidence intervals are 95\% via cluster bootstrap with prompt as the unit of resampling (1,000 resamples). This accounts for within-prompt dependence across candidates.

\section{Main Results: Correlation Is Not Decision Utility}
\label{sec:results}

\subsection{The Headline Gap}

Unless stated otherwise, results in this section are from the main cross-policy benchmark (5,000 prompts, best-of-4).

\begin{table}[h]
\centering
\caption{Main cross-policy best-of-4 results on 5,000 prompts (four similar-quality candidates per prompt).}
\label{tab:main-results}
\begin{tabular}{lll}
\toprule
\textbf{Metric} & \textbf{Point Estimate} & \textbf{95\% CI} \\
\midrule
Global $r$ & 0.471 & --- \\
Within-prompt $r$ & 0.267 & [0.234, 0.297] \\
$\alpha$ (attenuation) & 0.180 & [0.155, 0.204] \\
$\pnt$ (conditional sign agree) & 67.6\% & [66.3\%, 68.9\%] \\
$\tauwithin$ (ranking) & 0.260 & [0.24, 0.28] \\
$\peff$ (tie-aware) & 55.2\% & --- \\
Recovery rate & 21.0\% & [18.9\%, 23.3\%] \\
Top-1 accuracy & 31.6\% & [30.9\%, 32.3\%] \\
\bottomrule
\end{tabular}
\end{table}

The judge achieves $r = 0.47$ globally but only $r = 0.27$ within prompts. The attenuation coefficient $\alpha = 0.18$ indicates strong attenuation of oracle within-prompt signal on the judge-score scale. The standard audit number looks acceptable, but the deployment-facing numbers do not: recovery is 21.0\% and top-1 accuracy is 31.6\%. For deployment, two points matter most. First, pairwise judging can recover signal hidden by pointwise ties in matched-pair settings (\Cref{tab:pairwise}), but gains are not universal under stricter best-of-4 budgeted audits (\Cref{tab:round-robin-budget}). Second, meaningful best-of-4 gains usually require materially higher within-prompt coupling than $r_{\text{within}} \approx 0.27$ (\Cref{tab:recovery-thresholds}).

\subsection{Variance Decomposition: Shared Baseline Effects Dominate}

\begin{table}[h]
\centering
\caption{Variance decomposition of judge and oracle scores into between-context and within-context components (prompt is the context in this dataset).}
\label{tab:variance}
\begin{tabular}{lrr}
\toprule
\textbf{Component} & \textbf{Judge} & \textbf{Oracle} \\
\midrule
Between-context variance (prompt-level) & 74\% & 81\% \\
Within-context variance (candidate-level) & 26\% & 19\% \\
\bottomrule
\end{tabular}
\end{table}

Most variance is context-level (prompt in this dataset). The judge and oracle agree on baseline effects (for example, some prompts are uniformly lower quality across all candidates). These baseline effects create global correlation but do not help optimization. Global correlation mostly captures agreement on context-level averages: both judge and oracle often agree on baseline shifts across candidate sets. Best-of-$n$ needs a different capability: local choice within one prompt.

That local signal is weak ($\rwithin = 0.27$). Coarse quantization makes it weaker: with only $\sim$20 score levels, many close candidates receive identical scores, so direction is erased and tie-breaking becomes random.

\begin{figure}[h]
\centering
\includegraphics[width=0.8\textwidth]{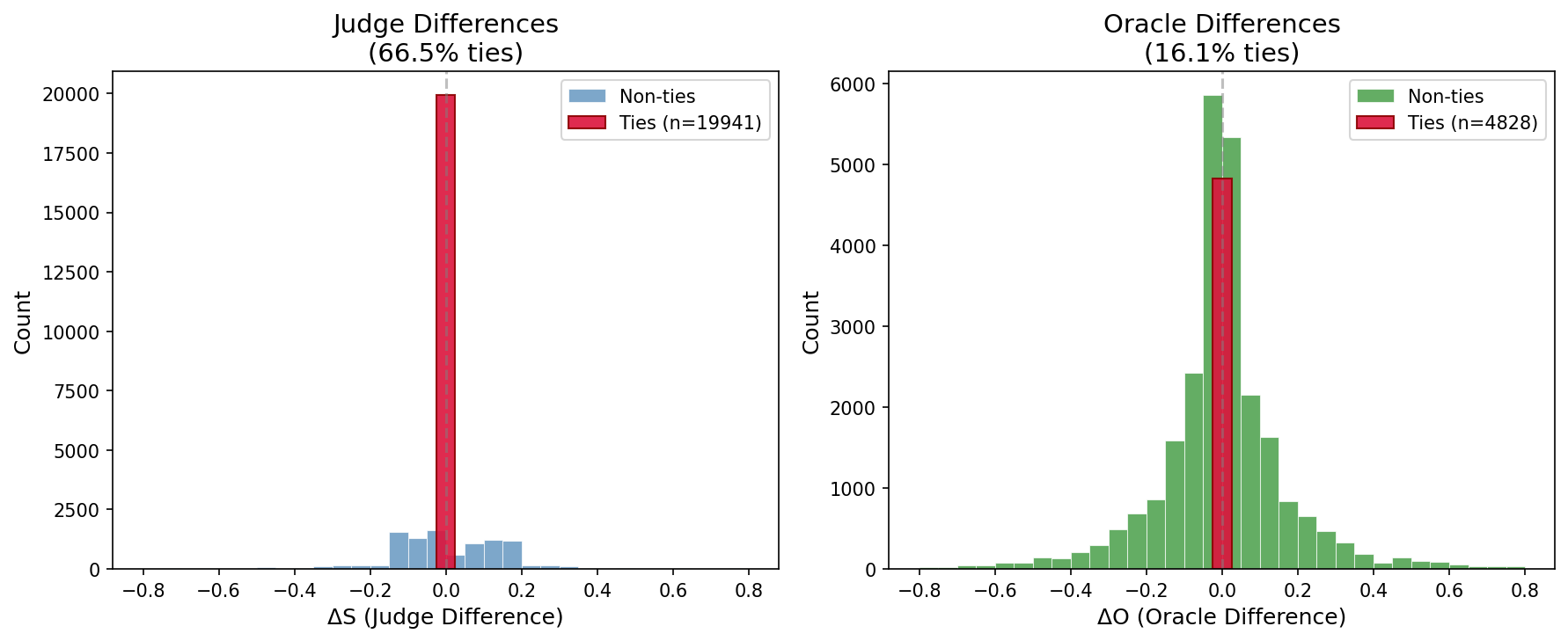}
\caption{Distribution of pairwise score differences. Left: Judge differences ($\Delta S$) show 66.5\% ties due to 20-bin discretization. Right: Oracle differences ($\Delta O$) show 16.1\% ties. The judge's coarse resolution is the primary bottleneck for directional decisions.}
\label{fig:tie-structure}
\end{figure}

\subsection{Tie Structure Is a Major Bottleneck}

The judge emits only $\sim$20 unique values. This coarse discretization means most pairwise comparisons within a prompt produce identical scores. We distinguish pairwise ties (67\%: any two responses receive identical scores) from top-1 margin ties (99\%: the highest-scoring response ties with at least one other). The latter is more severe because best-of-$n$ selection requires a unique winner. With 20 score bins and 4 candidates, ties at the top are nearly universal. On tied pairs, the judge cannot express direction; selection among tied options is effectively random.

\subsection{Does Pairwise Judging Help?}

A natural hypothesis is that the 66.5\% tie rate reflects quantization, not fundamental inability to discriminate. Perhaps forcing the judge to make explicit pairwise comparisons (``Which is better, A or B?'') would recover the lost signal, consistent with recent pairwise ranking evidence for LLM evaluators \cite{qin2024pairwise}. At the same time, pairwise comparison is not universally safer: recent meta-evaluation work shows that comparative formats can also amplify evaluator biases depending on the task and prompt design \cite{jeong2025comparative}.

We test this on 7,000 stratified pairs using the same fixed judge snapshot (GPT-5, gpt-5-2025-08-07), comparing pointwise-derived preferences (from independent 0--100 scores) against an explicit pairwise prompt that allowed \texttt{A}, \texttt{B}, or \texttt{TIE} and also elicited a 1--5 confidence score.

The recovery rates in this subsection (Table~\ref{tab:pairwise}) differ from the main results (Table~\ref{tab:main-results}) for two reasons: (1) pairwise selection is best-of-2, not best-of-4, making the task easier; (2) the 7,000 pairs include 500 control pairs (similar vs.\ unhelpful) that achieve near-perfect accuracy and inflate overall recovery. The relevant comparison is within Table~\ref{tab:pairwise}: pointwise vs.\ pairwise on identical pairs.

\begin{table}[h]
\centering
\caption{Pointwise vs pairwise evaluation on 7,000 matched pairs (best-of-2 setting).}
\label{tab:pairwise}
\begin{tabular}{lrrrr}
\toprule
\textbf{Method} & \textbf{Tie Rate} & \textbf{Agreement} & \textbf{$\peff$} & \textbf{Recovery} \\
\midrule
Pointwise & 59.8\% & 75.2\% & 60.5\% & 21.1\% \\
Pairwise & 3.9\% & 81.1\% & 80.6\% & 61.2\% \\
\bottomrule
\end{tabular}
\end{table}

Pairwise judging reduces ties (3.9\% vs 59.8\%) and improves head-to-head decision utility in this matched-pair dataset: $\peff$ rises from 60.5\% to 80.6\% (+20.1pp), and recovery rises from 21.1\% to 61.2\%. Unlike pointwise scoring, which can abstain via ties, pairwise comparison almost always produces a direction and recovers signal that pointwise quantization had hidden. On the 3,662 pairs where pointwise tied but the oracle had a preference, pairwise returned a non-tie decision 97.9\% of the time and achieved 78.5\% accuracy. Accuracy increased monotonically with oracle gap size (from 66.3\% on small gaps to 94.1\% on large gaps), so the gains from reducing abstention outweighed occasional pairwise errors.

We compared three methods for obtaining judge confidence: self-reported (1--5 scale), token logprobs, and elicited probability estimates (``What is P(A$>$B)?''). In this dataset, elicited probabilities have lower average bin-level calibration error (0.12 vs 0.27 for logprobs), suggesting deliberative probability estimation outperforms raw decoding confidence, in line with prior calibration findings \cite{kadavath2022language,tian2023justask,gao2024bayesian}. See Appendix~\ref{app:confidence} for details.

In this matched best-of-2 setting, pairwise comparison with a capable judge can substantially improve head-to-head decision validity. In this setup, much of the pointwise tie bottleneck appears to come from output quantization rather than inability to discriminate. This finding is for pairwise choices (best-of-2); deployment still requires a task-matched best-of-$n$ audit.

\subsection{Label-Strategy Stress Test: Not Just a Continuous-Scoring Artifact}
\label{sec:label-strategy-ablation}

To test whether our main failure mode is merely an artifact of continuous pointwise scoring, we run a matched best-of-4 ablation with three scoring strategies on the same prompt subset where pairwise labels are available (\(n=3{,}741\) prompts): raw pointwise scores, quantized pointwise scores, and pairwise-Borda aggregation.

\begin{table}[h]
\centering
\caption{Matched best-of-4 label-strategy ablation on prompts with pairwise data (\(n=3{,}741\)).}
\label{tab:label-strategy}
\begin{tabular}{lrrrrr}
\toprule
\textbf{Strategy} & \textbf{Calls/Prompt} & \(\rwithin\) & \textbf{Tie Rate} & \textbf{PCS@4} & \textbf{Recovery} \\
\midrule
Pointwise & 4 & 0.267 & 65.1\% & 31.8\% & 20.7\% \\
Quantized pointwise & 4 & 0.267 & 65.2\% & 31.8\% & 20.7\% \\
Pairwise-Borda (partial) & 6 & 0.359 & 41.2\% & 40.8\% & 51.3\% \\
\bottomrule
\end{tabular}
\end{table}

Two points are clear (\Cref{tab:label-strategy}). First, coarse quantization is not the full story: forcing a 20-bin scale leaves \(\rwithin\), tie rate, and recovery essentially unchanged from raw pointwise scoring. Second, pairwise labeling helps substantially in best-of-4, reducing ties and improving recovery, but does not eliminate the gap to oracle selection.

This pairwise-Borda row is exploratory: available pairwise logs are sparse at the prompt-round-robin level (pairwise data for 74.8\% of prompts, but full 4-candidate round-robin for only 0.02\%). We therefore treat this as directional evidence that label strategy matters, not a definitive replacement for a fully balanced best-of-4 pairwise benchmark. Together with PPE-MATH (\Cref{tab:ppe-comparison}) and external ablations (\Cref{tab:external-label-strategy}), these results support a consistent conclusion: the level-vs-decision gap is not only a continuous-scoring artifact. Label strategy changes magnitude, but the between-vs-within mismatch can persist under objective binary labels and explicit pairwise formulations.

\subsection{Actionable Thresholds: How Much Within-Prompt Signal Is Enough?}
\label{sec:thresholds}

To make deployment decisions concrete, we translate target recovery into required directional quality. Table~\ref{tab:recovery-thresholds} summarizes the threshold mapping from our recovery-requirements analysis (full curve in Appendix~\ref{app:recovery}).

\begin{table}[h]
\centering
\caption{Approximate directional quality needed to reach target best-of-4 recovery in this regime.}
\label{tab:recovery-thresholds}
\begin{tabular}{lrr}
\toprule
\textbf{Target Recovery} & \textbf{Required $\peff$} & \textbf{Required $\rwithin$} \\
\midrule
25\% & 54\% & 0.17 \\
50\% & 60\% & 0.42 \\
75\% & 69\% & 0.76 \\
\bottomrule
\end{tabular}
\end{table}

For best-of-4 in this regime, $\rwithin \approx 0.25$ is weak, $\rwithin \approx 0.4$ is roughly the point where recovery becomes practically meaningful, and very high recovery requires much stronger within-prompt coupling.

\section{Generalizability: Easy Benchmarks Inflate Perceived Quality}
\label{sec:generalizability}

\begin{table}[h]
\centering
\caption{Metrics by evaluation regime (hard similar-only vs mixed with easy unhelpful pairs).}
\label{tab:regimes}
\begin{tabular}{lrrr}
\toprule
\textbf{Regime} & \textbf{Global $r$} & \textbf{Sign Agree} & \textbf{Recovery} \\
\midrule
Similar policies only & 0.47 & 67.6\% & 21.0\% \\
All (incl.\ unhelpful) & 0.82 & 88.2\% & 67.2\% \\
\bottomrule
\end{tabular}
\end{table}

\begin{figure}[h]
\centering
\includegraphics[width=0.9\textwidth]{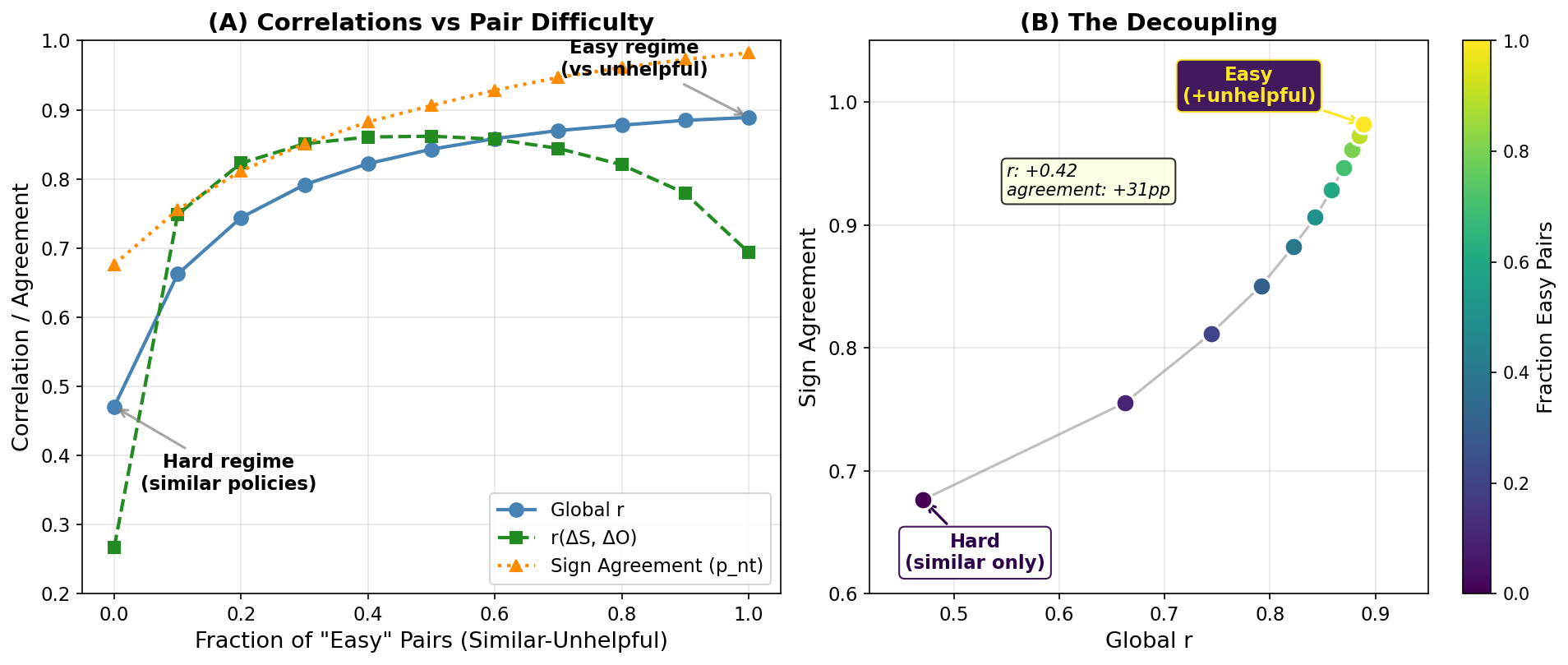}
\caption{Candidate-similarity sensitivity analysis. (A) As trivially distinguishable pairs are added to evaluation, global $r$ can increase from 0.47 to as high as 0.89 while hard-regime performance is unchanged. (B) Global $r$ and sign agreement move differently under this mix shift: adding easy pairs increases both metrics, but sign agreement rises much more (+30pp) than $r$ (+0.42).}
\label{fig:phase-diagram}
\end{figure}

Adding trivially distinguishable pairs (similar vs unhelpful) inflates all metrics. A reported $r = 0.82$ on a benchmark that includes clearly low-quality candidates can substantially overstate performance in the regime that matters, namely distinguishing among similar candidates (\Cref{tab:regimes,fig:phase-diagram}). The similar-unhelpful pairs achieve 98.2\% sign agreement, masking the 67.6\% on similar-similar pairs.

\paragraph{Implication for evaluation design.} Audit judges on the hard regime that matches deployment (near-neighbor candidates), not on mixed-difficulty benchmarks.

\subsection{Generalization Across Judge Families}
\label{sec:multi-judge}

To verify that our findings are not artifacts of a single judge, we evaluate five judges spanning four model families on a shared 500-prompt sample, using complete-case filtering for the small number of missing scores in two judges.

\begin{table}[h]
\centering
\caption{Multi-judge evaluation results on a shared 500-prompt sample (486--500 complete prompts per judge after missing-score filtering). For every judge tested, global $r$ exceeds within-prompt $r$; recovery increases with within-prompt coupling across this small 5-judge sample.}
\label{tab:multi-judge}
\begin{tabular}{llrrrr}
\toprule
\textbf{Judge} & \textbf{Family} & \textbf{Global $r$} & \textbf{Within $r$} & \textbf{Gap} & \textbf{Recovery} \\
\midrule
GPT-5.2 & OpenAI & 0.87 & 0.70 & 20\% & 69.4\% \\
Claude Sonnet 4 & Anthropic & 0.59 & 0.42 & 29\% & 47.7\% \\
GPT-4.1-mini & OpenAI & 0.56 & 0.47 & 17\% & 43.6\% \\
Gemini-2.5-flash & Google & 0.47 & 0.27 & 42\% & 23.8\% \\
Llama-3.3-70b & Meta & 0.31 & 0.23 & 25\% & 18.6\% \\
\bottomrule
\end{tabular}
\end{table}

Across all five judges we tested, global correlation exceeds within-prompt correlation. Recovery rises as within-prompt coupling improves in this five-judge table, but we treat that pattern as descriptive rather than as a stable correlation estimate. The mechanistic conclusion remains anchored in the metric definitions: best-of-$n$ decisions depend on within-prompt ranking quality.

\paragraph{Calibration alone does not reliably repair directional validity.} We applied CJE-style isotonic calibration (monotone, two-stage, and two-stage with covariates) to all judges. Pure monotone calibration improved global $r$ by about 23\% on average but had negligible effect on recovery (about $-0.3$pp), empirically validating Lemma~\ref{lem:monotone}. Adding response length as a covariate improved global $r$ by 38\% and recovery by 2.6pp on average, but degraded sign agreement by 3.7pp. The covariate breaks ties, which improves recovery mechanically, while the tie-breaks are not quality-aligned, which hurts sign agreement. See Appendix~\ref{app:multi-judge} for details.

\paragraph{Additional validation.} We also replicate the pattern on a MATH-only correctness slice from the PPE release introduced by Frick et al.~\cite{ppe2024} (denoted PPE-MATH here), in an LLM-to-LLM setting, and in a deployment-matched within-policy fresh-draw pilot. Those results are reported in Appendix~\ref{app:extra-generalization}; they preserve the same qualitative message while clarifying scope and caveats for each setting.

\section{Discussion}
\label{sec:discussion}

\subsection{Scope of the Evidence}

This paper makes an evaluation-method contribution focused on within-prompt decision validity (directional validity). Aggregate judge agreement can overstate decision usefulness for best-of-$n$, and the paper provides diagnostics that directly target deployment decisions (within-prompt coupling, tie-aware metrics, recovery, and PCS).

The mechanism is broader than prompt difficulty alone: any context-level baseline effect that moves judge and oracle together can inflate global agreement without improving within-prompt choices (e.g., difficulty, topic/domain, response style/length, or candidate-set composition).

Our main evidence is the 5,000-prompt cross-policy benchmark. We also include a deployment-matched within-policy fresh-draw pilot (24 prompts) in Appendix~\ref{app:extra-generalization}; treat that pilot as directional evidence rather than a definitive estimate.

The analysis covers the one-step, fixed-policy case of proxy use: given a fixed judge, does proxy-based selection improve within-prompt decisions? Repeated optimization pressure under policy shift is outside scope.

Some loss is strategy-dependent: in our setup, coarse pointwise scoring creates heavy ties, and explicit pairwise judging can recover substantial head-to-head signal (\Cref{tab:pairwise}). But that gain is not universal: under strict best-of-4 round-robin with explicit token accounting, full pairwise underperforms pointwise for both a smaller judge (GPT-4.1-mini) and a stronger judge pilot (GPT-5) (\Cref{tab:round-robin-budget}). The core level-vs-decision gap is not eliminated by changing labeling style: it persists on PPE-MATH (objective binary correctness labels; \Cref{tab:ppe-comparison}), on external datasets (\Cref{tab:external-label-strategy}), and across judge families (\Cref{tab:multi-judge}). Labeling strategy is therefore a strong moderator, not a complete explanation.

\subsection{Out of Scope}

We measured offline selection with fixed judges. We did not run RLHF training loops or measure learning dynamics over updates. Any RLHF statement in this paper is therefore a motivation-by-analogy claim: if reward models in RLHF pipelines \cite{christiano2017deep,ouyang2022training} have similar variance structure (high context-level baseline signal, weak within-prompt directional signal), optimization would inherit the same signal-quality bottleneck.

The evidence directly covers inference-time best-of-$n$ and reranking decisions with fixed judges. Verifier pipelines that score many candidates and pick one are directly analogous. PPO-style RLHF and DPO-like preference optimization are mechanistically adjacent: learning depends on prompt-conditional reward differences, so weak within-prompt reward signal implies noisier or slower policy improvement even if global RM agreement is high \cite{christiano2017deep,ouyang2022training,rafailov2023direct}. Full training-loop verification remains future work.

\subsection{Use-Case Boundary}

For system-level tasks (ranking whole models, tracking leaderboard shifts), aggregate metrics can be the right target \cite{zheng2023judging,chiang2024chatbotarena,liu2023geval}. Recent meta-evaluation work likewise distinguishes system-ranking quality from instance-level best-output choice \cite{gao2025systemranking}. Those results do not automatically validate within-prompt decisions. If a deployment uses judge scores to choose among candidates for one prompt, directional audits are required. \Cref{tab:use-case-boundary} summarizes this boundary.

\begin{table}[h]
\centering
\small
\begin{tabular}{p{0.47\textwidth} p{0.47\textwidth}}
\toprule
\textbf{System-level benchmarking} & \textbf{Within-prompt optimization} \\
\midrule
Goal: compare models on average across many prompts & Goal: pick the best candidate for one prompt \\
Primary metric: global agreement/correlation is often appropriate & Primary metric must include decision utility (Recovery/PCS) \\
Main risk: judge bias can skew leaderboard estimates & Main risk: global metrics can look strong while local ranking is weak \\
Recommended checks: aggregate agreement + transport/bias diagnostics & Recommended checks: $\rwithin$, tie rates, $\peff$, Recovery, PCS \\
\bottomrule
\end{tabular}
\caption{Use-case boundary for LLM-judge evaluation. Global metrics can be suitable for model-level benchmarking, but within-prompt selection requires directional diagnostics.}
\label{tab:use-case-boundary}
\end{table}

\subsection{A Hierarchy of Evaluation Validity}

Together with causal judge evaluation work, our results suggest four distinct validity levels. These levels are independent: passing one does not guarantee passing the others (Appendix~\ref{app:level-directional}, Theorem~\ref{thm:independence-level-directional}).

The surrogate-validity lens remains a secondary interpretive bridge, while the primary operational target here is within-prompt decision validity.

\begin{table}[h]
\centering
\small
\begin{tabular}{@{}p{0.15\textwidth} p{0.30\textwidth} p{0.45\textwidth}@{}}
\toprule
\textbf{Level} & \textbf{Guarantee} & \textbf{Does not guarantee} \\
\midrule
Correlation validity & High global agreement with oracle labels & Reliable within-prompt ranking or best-of-$n$ gains \\
Level validity & Unbiased policy-level averages after calibration & Correct candidate-level ordering within a prompt \\
Directional validity & Positive within-prompt ranking signal & Large end-to-end decision gains (Recovery/PCS) \\
Decision validity & Real utility in best-of-$n$ selection (high Recovery/PCS) & Causal attribution for why utility is high or low \\
\bottomrule
\end{tabular}
\caption{Validity hierarchy at a glance. These levels are complementary, not nested: a judge can pass one and fail another.}
\label{tab:validity-hierarchy}
\end{table}

Use CJE-style transport diagnostics for policy-level comparisons, and directional metrics (within-prompt $r$, tie rates, recovery/PCS) for instance-level selection.

\subsection{Practical Checklist}

\begin{enumerate}[leftmargin=*]
    \item State the decision target. For system-level benchmarking, global metrics may be sufficient. For within-prompt selection, they are not.
    \item Report within-prompt metrics. Include $\rwithin$, tie rates, recovery, and PCS, not just global $r$.
    \item Audit in the hard regime. Use near-neighbor candidate sets that match deployment.
    \item Audit pairwise before deployment. In matched-pair best-of-2, pairwise sharply reduced ties and improved recovery; in a strict budgeted best-of-4 round-robin setting, it did not (\Cref{tab:pairwise,tab:round-robin-budget}). The recent literature is similarly mixed: pairwise can improve comparison quality, but can also amplify evaluator biases \cite{jeong2025comparative}.
    \item Use explicit uncertainty for routing. Margin alone is weak here; CI width or resampling disagreement is stronger.
    \item Use explicit thresholds. In this regime, $\rwithin \approx 0.4$ is roughly where best-of-4 recovery becomes practically meaningful (\Cref{tab:recovery-thresholds}).
\end{enumerate}

\subsection{Limitations and Falsification}

\begin{itemize}[leftmargin=*]
    \item Our main setting uses one judge scoring interface with coarse discretization ($\sim$20 bins).
    \item The within-policy fresh-draw experiment is a small pilot ($n=24$ prompts).
    \item The stored reference labels are still proxies for ``true'' quality.
\end{itemize}

A counterexample would be a judge with similar global correlation but much higher within-prompt coupling and recovery under the same deployment-matched candidate regime.

\section{Conclusion}
\label{sec:conclusion}

Aggregate validity metrics can substantially overstate an LLM judge's decision usefulness for within-prompt optimization. A judge can look acceptable on a single global metric and still make poor best-of-$n$ choices. In our main cross-policy setting, a judge with $r = 0.47$ reaches only 21.0\% recovery (\Cref{tab:main-results}). In our within-policy fresh-draw pilot (common deployment pattern), global agreement is $r_{\text{global}} = 0.70$, but PCS$_4$ is only 27.4\% and recovery is only 2.2\% (\Cref{tab:within-policy-fresh}).

As secondary extensions beyond the main decision-validity result, we provide concrete tools: within-between decomposition, tie-aware directional metrics, and candidate-similarity sensitivity audits. In our matched-pair best-of-2 audit, pairwise judging sharply reduces quantization ties in head-to-head comparisons, but strict best-of-4 budgeted audits show those gains are not automatic (\Cref{tab:round-robin-budget}). Teams still need deployment-matched best-of-$n$ validation. Margin-only routing (``query the oracle when margin is small'') is weak in this dataset. Explicit uncertainty works better: confidence-interval width or resampling disagreement correlates with gain ($r \approx 0.26$) and captures 26\% of optimal routing value.

This means the observed failure is not only a pointwise-scoring artifact: label strategy can improve performance materially, but the level-vs-decision gap can still persist under alternative labeling regimes.

This paper addresses one-step fixed-policy proxy use (selection/reranking with a fixed judge). Repeated optimization under policy shift is outside scope and left to future work.

Practitioners should report decision-centric metrics, audit in the hard regime that matches deployment, and use explicit uncertainty only when its extra cost is justified. A practical threshold from our results is that best-of-4 recovery becomes meaningfully useful around $\rwithin \approx 0.4$ in this regime (\Cref{tab:recovery-thresholds}). For low-stakes settings, random oracle allocation remains a strong baseline.

This result extends current judge evaluation practice rather than rejecting it: global metrics remain useful for system-level benchmarking, while optimization use cases need directional validation.

\bibliographystyle{plain}
\bibliography{references}

\begin{thebibliography}{10}

\bibitem{bang2005doubly}
Heejung Bang and James~M. Robins.
\newblock Doubly robust estimation in missing data and causal inference models.
\newblock {\em Biometrics}, 61(4):962--973, 2005.

\bibitem{bavaresco2025judgebench}
Anna Bavaresco, Raffaella Bernardi, Leonardo Bertolazzi, Desmond Elliott,
  Raquel Fern{\'a}ndez, Albert Gatt, Esam Ghaleb, Mario Giulianelli, Michael
  Hanna, Alexander Koller, Andre Martins, Philipp Mondorf, Vera Neplenbroek,
  Sandro Pezzelle, Barbara Plank, David Schlangen, Alessandro Suglia, Aditya~K
  Surikuchi, Ece Takmaz, and Alberto Testoni.
\newblock {LLM}s instead of human judges? a large scale empirical study across
  20 {NLP} evaluation tasks.
\newblock In {\em Proceedings of the 63rd Annual Meeting of the Association for
  Computational Linguistics (Volume 2: Short Papers)}, pages 238--255, 2025.

\bibitem{bechhofer1954single}
Robert~E Bechhofer.
\newblock A single-sample multiple decision procedure for ranking means of
  normal populations with known variances.
\newblock {\em The Annals of Mathematical Statistics}, 25(1):16--39, 1954.

\bibitem{chen2024judgebias}
Guiming~Hardy Chen, Shunian Chen, Ziche Liu, Feng Jiang, and Benyou Wang.
\newblock Humans or {LLM}s as the judge? a study on judgement bias.
\newblock In {\em Proceedings of the 2024 Conference on Empirical Methods in
  Natural Language Processing}, pages 8301--8327, 2024.

\bibitem{chernozhukov2018double}
Victor Chernozhukov, Denis Chetverikov, Mert Demirer, Esther Duflo, Christian
  Hansen, Whitney Newey, and James Robins.
\newblock Double/debiased machine learning for treatment and structural
  parameters.
\newblock {\em The Econometrics Journal}, 21(1):C1--C68, 2018.

\bibitem{chiang2024chatbotarena}
Wei-Lin Chiang, Lianmin Zheng, Ying Sheng, Anastasios~Nikolas Angelopoulos,
  Tianle Li, Dacheng Li, Banghua Zhu, Hao Zhang, Michael~I. Jordan, Joseph~E.
  Gonzalez, and Ion Stoica.
\newblock Chatbot arena: An open platform for evaluating llms by human
  preference.
\newblock In {\em Proceedings of the 41st International Conference on Machine
  Learning}, volume 235 of {\em Proceedings of Machine Learning Research},
  pages 8359--8388, 2024.

\bibitem{choi2026irtjudge}
Junhyuk Choi, Sohhyung Park, Chanhee Cho, Hyeonchu Park, and Bugeun Kim.
\newblock Diagnosing the reliability of {LLM}-as-a-judge via item response
  theory.
\newblock {\em arXiv preprint arXiv:2602.00521}, 2026.

\bibitem{christiano2017deep}
Paul~F Christiano, Jan Leike, Tom~B Brown, Miljan Martic, Shane Legg, and Dario
  Amodei.
\newblock Deep reinforcement learning from human preferences.
\newblock In {\em Advances in Neural Information Processing Systems},
  volume~30, pages 4299--4307, 2017.

\bibitem{cobbe2021training}
Karl Cobbe, Vineet Kosaraju, Mohammad Bavarian, Mark Chen, Heewoo Jun, Lukasz
  Kaiser, Matthias Plappert, Jerry Tworek, Jacob Hilton, Reiichiro Nakano,
  Christopher Hesse, and John Schulman.
\newblock Training verifiers to solve math word problems.
\newblock {\em arXiv preprint arXiv:2110.14168}, 2021.

\bibitem{dev2026jrh}
Sunishchal Dev, Andrew Sloan, Joshua Kavner, Nicholas Kong, and Morgan Sandler.
\newblock Judge reliability harness: Stress testing the reliability of {LLM}
  judges.
\newblock {\em arXiv preprint arXiv:2603.05399}, 2026.

\bibitem{ppe2024}
Evan Frick, Tianle Li, Connor Chen, Wei-Lin Chiang, Anastasios~N. Angelopoulos,
  Jiantao Jiao, Banghua Zhu, Joseph~E. Gonzalez, and Ion Stoica.
\newblock How to evaluate reward models for {RLHF}.
\newblock {\em arXiv preprint arXiv:2410.14872}, 2024.

\bibitem{gao2023scaling}
Leo Gao, John Schulman, and Jacob Hilton.
\newblock Scaling laws for reward model overoptimization.
\newblock In {\em Proceedings of the 40th International Conference on Machine
  Learning}, volume 202 of {\em Proceedings of Machine Learning Research},
  pages 10835--10866. PMLR, 2023.

\bibitem{gao2025systemranking}
Mingqi Gao, Yixin Liu, Xinyu Hu, Xiaojun Wan, Jonathan Bragg, and Arman Cohan.
\newblock Re-evaluating automatic {LLM} system ranking for alignment with human
  preference.
\newblock In {\em Findings of the Association for Computational Linguistics:
  NAACL 2025}, pages 4605--4629, 2025.

\bibitem{gao2024bayesian}
Yicheng Gao, Gonghan Xu, Zhe Wang, and Arman Cohan.
\newblock Bayesian calibration of win rate estimation with {LLM} evaluators.
\newblock In {\em Proceedings of the 2024 Conference on Empirical Methods in
  Natural Language Processing}, pages 4757--4769, 2024.

\bibitem{goodhart1984problems}
Charles~AE Goodhart.
\newblock Problems of monetary management: The {UK} experience.
\newblock In {\em Monetary Theory and Practice}, pages 91--121. Palgrave
  Macmillan, 1984.

\bibitem{huang2025empirical}
Hui Huang, Xingyuan Bu, Hongli Zhou, Yingqi Qu, Jing Liu, Muyun Yang, Bing Xu,
  and Tiejun Zhao.
\newblock An empirical study of {LLM}-as-a-judge for {LLM} evaluation:
  Fine-tuned judge model is not a general substitute for {GPT}-4.
\newblock In {\em Findings of the Association for Computational Linguistics:
  ACL 2025}, pages 5880--5895, 2025.

\bibitem{jeong2025comparative}
Hawon Jeong, ChaeHun Park, Jimin Hong, Hojoon Lee, and Jaegul Choo.
\newblock The comparative trap: Pairwise comparisons amplifies biased
  preferences of {LLM} evaluators.
\newblock In {\em Proceedings of the 8th BlackboxNLP Workshop: Analyzing and
  Interpreting Neural Networks for NLP}, pages 79--108, 2025.

\bibitem{kadavath2022language}
Saurav Kadavath, Tom Conerly, Amanda Askell, Tom Henighan, Dawn Drain, Ethan
  Perez, Nicholas Schiefer, Zac Hatfield-Dodds, Nova DasSarma, Eli
  Tran-Johnson, et~al.
\newblock Language models (mostly) know what they know.
\newblock {\em arXiv preprint arXiv:2207.05221}, 2022.

\bibitem{kim2023prometheus}
Seungone Kim, Jamin Shin, Yejin Cho, Joel Jang, Shayne Longpre, Hwaran Lee,
  Sangdoo Yun, Seongjin Shin, Sungdong Kim, James Thorne, and Minjoon Seo.
\newblock Prometheus: Inducing fine-grained evaluation capability in language
  models.
\newblock {\em arXiv preprint arXiv:2310.08491}, 2023.

\bibitem{kim2025rethinking}
Sunghwan Kim, Dongjin Kang, Taeyoon Kwon, Hyungjoo Chae, Dongha Lee, and
  Jinyoung Yeo.
\newblock Rethinking reward model evaluation through the lens of reward
  overoptimization.
\newblock In {\em Proceedings of the 63rd Annual Meeting of the Association for
  Computational Linguistics (Volume 1: Long Papers)}, pages 13252--13280, 2025.

\bibitem{king1997solution}
Gary King.
\newblock {\em A Solution to the Ecological Inference Problem: Reconstructing
  Individual Behavior from Aggregate Data}.
\newblock Princeton University Press, 1997.

\bibitem{koo2024cobbler}
Ryan Koo, Minhwa Lee, Vipul Raheja, Jong~Inn Park, Zae~Myung Kim, and Dongyeop
  Kang.
\newblock Benchmarking cognitive biases in large language models as evaluators.
\newblock In {\em Findings of the Association for Computational Linguistics:
  ACL 2024}, pages 517--545, 2024.

\bibitem{lambert2024rewardbench}
Nathan Lambert, Valentina Pyatkin, Jacob Morrison, LJ~Miranda, Bill~Yuchen Lin,
  Khyathi Chandu, Nouha Dziri, Sachin Kumar, Tom Zick, Yejin Choi, Noah~A
  Smith, and Hannaneh Hajishirzi.
\newblock {RewardBench}: Evaluating reward models for language modeling.
\newblock {\em arXiv preprint arXiv:2403.13787}, 2024.

\bibitem{landesberg2026cje}
Eddie Landesberg and Manjari Narayan.
\newblock Causal judge evaluation: Calibrated surrogate metrics for {LLM}
  systems.
\newblock {\em arXiv preprint arXiv:2512.11150}, 2025.

\bibitem{liu2023geval}
Yang Liu, Dan Iter, Yichong Xu, Shuohang Wang, Ruochen Xu, and Chenguang Zhu.
\newblock G-eval: Nlg evaluation using gpt-4 with better human alignment.
\newblock In {\em Proceedings of the 2023 Conference on Empirical Methods in
  Natural Language Processing}, pages 2511--2522, 2023.

\bibitem{manheim2018categorizing}
David Manheim and Scott Garrabrant.
\newblock Categorizing variants of {Goodhart's Law}.
\newblock {\em arXiv preprint arXiv:1803.04585}, 2018.

\bibitem{nakano2021webgpt}
Reiichiro Nakano, Jacob Hilton, Suchir Balaji, Jeff Wu, Long Ouyang, Christina
  Kim, Christopher Hesse, Shantanu Jain, Vineet Kosaraju, William Saunders,
  et~al.
\newblock {WebGPT}: Browser-assisted question-answering with human feedback.
\newblock {\em arXiv preprint arXiv:2112.09332}, 2021.

\bibitem{neyman1934representative}
Jerzy Neyman.
\newblock On the two different aspects of the representative method: The method
  of stratified sampling and the method of purposive selection.
\newblock {\em Journal of the Royal Statistical Society}, 97(4):558--625, 1934.

\bibitem{ouyang2022training}
Long Ouyang, Jeffrey Wu, Xu~Jiang, Diogo Almeida, Carroll Wainwright, Pamela
  Mishkin, Chong Zhang, Sandhini Agarwal, Katarina Slama, Alex Ray, John
  Schulman, Jacob Hilton, Fraser Kelton, Luke Miller, Maddie Simens, Amanda
  Askell, Peter Welinder, Paul Christiano, Jan Leike, and Ryan Lowe.
\newblock Training language models to follow instructions with human feedback.
\newblock In {\em Advances in Neural Information Processing Systems},
  volume~35, pages 27730--27744, 2022.

\bibitem{qin2024pairwise}
Zhen Qin, Rolf Jagerman, Kai Hui, Honglei Zhuang, Junru Wu, Le~Yan, Jiaming
  Shen, Tianqi Liu, Jialu Liu, Donald Metzler, Xuanhui Wang, and Michael
  Bendersky.
\newblock Large language models are effective text rankers with pairwise
  ranking prompting.
\newblock In {\em Findings of the Association for Computational Linguistics:
  NAACL 2024}, pages 1504--1518, 2024.

\bibitem{rafailov2023direct}
Rafael Rafailov, Archit Sharma, Eric Mitchell, Stefano Ermon, Christopher~D
  Manning, and Chelsea Finn.
\newblock Direct preference optimization: Your language model is secretly a
  reward model.
\newblock In {\em Advances in Neural Information Processing Systems},
  volume~36, 2023.

\bibitem{raudenbush2002hierarchical}
Stephen~W Raudenbush and Anthony~S Bryk.
\newblock {\em Hierarchical Linear Models: Applications and Data Analysis
  Methods}.
\newblock Sage, 2002.

\bibitem{robinson1950ecological}
William~S Robinson.
\newblock Ecological correlations and the behavior of individuals.
\newblock {\em American Sociological Review}, 15(3):351--357, 1950.

\bibitem{simpson1951interpretation}
E.~H. Simpson.
\newblock The interpretation of interaction in contingency tables.
\newblock {\em Journal of the Royal Statistical Society: Series B},
  13(2):238--241, 1951.

\bibitem{stiennon2020learning}
Nisan Stiennon, Long Ouyang, Jeff Wu, Daniel~M Ziegler, Ryan Lowe, Chelsea
  Voss, Alec Radford, Dario Amodei, and Paul Christiano.
\newblock Learning to summarize with human feedback.
\newblock {\em Advances in Neural Information Processing Systems},
  33:3035--3046, 2020.

\bibitem{tian2023justask}
Katherine Tian, Eric Mitchell, Allan Zhou, Archit Sharma, Rafael Rafailov,
  Huaxiu Yao, Chelsea Finn, and Christopher~D Manning.
\newblock Just ask for calibration: Strategies for eliciting calibrated
  confidence scores from language models fine-tuned with human feedback.
\newblock In {\em Proceedings of the 2023 Conference on Empirical Methods in
  Natural Language Processing}, pages 5433--5442, 2023.

\bibitem{tsiatis2006semiparametric}
Anastasios~A Tsiatis.
\newblock {\em Semiparametric Theory and Missing Data}.
\newblock Springer, 2006.

\bibitem{wang2024fair}
Peiyi Wang, Lei Li, Liang Chen, Zefan Cai, Dawei Zhu, Binghuai Lin, Yunbo Cao,
  Lingpeng Kong, Qi~Liu, Tianyu Liu, and Zhifang Sui.
\newblock Large language models are not fair evaluators.
\newblock In {\em Proceedings of the 62nd Annual Meeting of the Association for
  Computational Linguistics (Volume 1: Long Papers)}, pages 9440--9450, 2024.

\bibitem{wen2026ifrewardbench}
Bosi Wen, Yilin Niu, Cunxiang Wang, Xiaoying Ling, Ying Zhang, Pei Ke, Hongning
  Wang, and Minlie Huang.
\newblock {IF}-{RewardBench}: Benchmarking judge models for
  instruction-following evaluation.
\newblock {\em arXiv preprint arXiv:2603.04738}, 2026.

\bibitem{xu2025uncertainjudge}
Zhenghao Xu, Qin Lu, Qingru Zhang, Liang Qiu, Ilgee Hong, Changlong Yu, Wenlin
  Yao, Yao Liu, Haoming Jiang, Lihong Li, Hyokun Yun, and Tuo Zhao.
\newblock Ask a strong {LLM} judge when your reward model is uncertain.
\newblock {\em arXiv preprint arXiv:2510.20369}, 2025.

\bibitem{zadrozny2002transforming}
Bianca Zadrozny and Charles Elkan.
\newblock Transforming classifier scores into accurate multiclass probability
  estimates.
\newblock In {\em Proceedings of the Eighth ACM SIGKDD International Conference
  on Knowledge Discovery and Data Mining}, pages 694--699, 2002.

\bibitem{zeng2024llmbar}
Zhiyuan Zeng, Jiatong Yu, Tianyu Gao, Yu~Meng, Tanya Goyal, and Danqi Chen.
\newblock Evaluating large language models at evaluating instruction following.
\newblock In {\em The Twelfth International Conference on Learning
  Representations}, 2024.

\bibitem{zheng2023judging}
Lianmin Zheng, Wei-Lin Chiang, Ying Sheng, Siyuan Zhuang, Zhanghao Wu, Yonghao
  Zhuang, Zi~Lin, Zhuohan Li, Dacheng Li, Eric~P. Xing, Hao Zhang, Joseph~E.
  Gonzalez, and Ion Stoica.
\newblock Judging {LLM}-as-a-judge with {MT-Bench} and {Chatbot Arena}.
\newblock In {\em Advances in Neural Information Processing Systems},
  volume~36, 2023.

\bibitem{zhu2023judgelm}
Lianghui Zhu, Xinggang Wang, and Xinlong Wang.
\newblock {JudgeLM}: Fine-tuned large language models are scalable judges.
\newblock {\em arXiv preprint arXiv:2310.17631}, 2023.

\end{thebibliography}

\appendix
\section{Why Score Calibration Does Not Fix Direction}
\label{sec:calibration}

\begin{table}[h]
\centering
\caption{Main-benchmark calibration illustration for the fixed GPT-5 judge.}
\label{tab:calibration}
\begin{tabular}{lr}
\toprule
\textbf{Method} & \textbf{Sign Agreement} \\
\midrule
No calibration & 66.3\% \\
Monotonic (isotonic) & 66.5\% \\
With covariates (GBM) & 58.8\% \\
\bottomrule
\end{tabular}
\end{table}

Table~\ref{tab:calibration} is the single-judge illustration on the main cross-policy benchmark. Appendix~\ref{app:multi-judge} repeats the same calibration comparison across all five judges and reports the same qualitative pattern in averages.

Monotone calibration preserves rankings by construction, so it cannot improve directional decisions. This includes isotonic calibration \cite{zadrozny2002transforming} and the monotone-only CJE mode \cite{landesberg2026cje}. Covariate-based calibration can \emph{hurt} because it overfits patterns in the calibration set that do not transport to the decision setting.

\paragraph{Core limitation.} Calibration transforms scores but cannot create information. The judge's limited resolution (20 bins) means information is lost before calibration can act on it. No post-hoc transformation can recover directional signal that was never captured.

\section{Additional Generalization Results}
\label{app:extra-generalization}

\subsection{External Validation: Verifiable Tasks}
\label{sec:ppe}

To test generalization beyond open-ended quality evaluation, we replicate our analysis on a 512-prompt MATH/correctness slice from the PPE release introduced by Frick et al.~\cite{ppe2024}; we denote this local slice PPE-MATH. Each prompt has 32 candidate responses with binary correctness labels. This complements broader reward-model benchmark efforts such as RewardBench \cite{lambert2024rewardbench}. This is a different regime: the reference label is objective (correct/incorrect), and most variance is \emph{within prompt} rather than between prompts.

\begin{table}[h]
\centering
\caption{Comparison of evaluation regimes.}
\label{tab:ppe-comparison}
\begin{tabular}{lrr}
\toprule
\textbf{Metric} & \textbf{Our Study} & \textbf{PPE-MATH} \\
\midrule
Global $r$ & 0.47 & 0.38 \\
Within-prompt $r$ & 0.27 & 0.30 \\
Gap (inflation) & 43\% & 22\% \\
Recovery & 21.0\% & 30\% \\
\midrule
Reference-label variance: between-context & 81\% & 30\% \\
Reference-label variance: within-context & 19\% & 70\% \\
\bottomrule
\end{tabular}
\end{table}

The level-vs-decision gap is smaller on PPE-MATH (22\% vs 43\%) because global correlation cannot be inflated as much when oracle variance is mostly within-context. Yet recovery remains modest (30\%), confirming that low decision utility is not specific to one benchmark style.

\paragraph{Uncertainty signal on PPE-MATH.} CI-width routing is weaker but still positive on PPE-MATH ($r = 0.13$ vs $r = 0.26$ in our main data), suggesting partial transfer of uncertainty signal across regimes.

\subsection{Generalization to LLM-to-LLM Evaluation}
\label{sec:llm-llm}

We also test cheap LLM judges against a stronger LLM reference judge (GPT-5.2).

\begin{table}[h]
\centering
\caption{LLM-to-LLM level-vs-decision gap results.}
\label{tab:llm-llm}
\begin{tabular}{lrrrr}
\toprule
\textbf{Judge} & \textbf{Global $r$} & \textbf{Within $r$} & \textbf{Gap} & \textbf{Recovery} \\
\midrule
GPT-5-mini (7$\times$ cheaper) & 0.84 & 0.54 & 35\% & 52\% \\
GPT-4.1-nano (35$\times$ cheaper) & 0.49 & 0.29 & 42\% & 2\% \\
\bottomrule
\end{tabular}
\end{table}

The inflation gap (35--42\%) persists, indicating that level-vs-decision divergence is not specific to human labels. However, this test uses GPT-5.2 as the reference judge, so it measures agreement with a stronger model rather than direct agreement with humans.

\subsection{Within-Policy Fresh-Draw Pilot (Deployment-Matched)}
\label{sec:within-policy-fresh}

A common deployment pattern for best-of-$n$ is \emph{within-policy} sampling: draw multiple stochastic candidates from one generator and select with a judge. We therefore ran a fresh-draw pilot on Chatbot Arena prompts using one generator model (Llama-3.3-70B), one judge (GPT-4.1-mini), and a stronger reference judge (GPT-5.2), with $n=4$ candidates per prompt.

\begin{table}[h]
\centering
\caption{Within-policy fresh-draw pilot (24 prompts, 4 candidates each).}
\label{tab:within-policy-fresh}
\begin{tabular}{lrrrrr}
\toprule
\textbf{Setting} & \textbf{Global $r$} & \textbf{Within $r$} & \textbf{Judge tie rate} & \textbf{PCS$_4$} & \textbf{Recovery} \\
\midrule
Within-policy, fresh draws & 0.70 & 0.29 & 69.4\% & 27.4\% & 2.2\% \\
\bottomrule
\end{tabular}
\end{table}

This pilot follows the same qualitative pattern as the main study: moderate global agreement, weak within-prompt decision quality, and heavy tie effects. Because $n=24$ prompts, treat magnitudes as preliminary.

\subsection{External Label-Strategy Ablation at Larger Scale}
\label{sec:external-label-strategy}

To test whether the pointwise-vs-quantized finding is specific to one dataset, we ran the same ablation on three external candidate sets with larger prompt counts: PPE-MATH (512 prompts, $k=4$), RewardBench2 (300 prompts, $k=4$), and HelpSteer2 (300 prompts, $k=2$). For each dataset, we compare raw pointwise scores to an explicit 20-bin quantized variant under the same best-of-$k$ protocol.

\begin{table}[h]
\centering
\caption{External label-strategy ablation (pointwise vs explicit quantization).}
\label{tab:external-label-strategy}
\begin{tabular}{lrrrr}
\toprule
\textbf{Dataset} & \(\mathbf{k}\) & \(\mathbf{r_{\text{within}}}\) (pt / qtz) & \textbf{Tie rate} (pt / qtz) & \textbf{Recovery} (pt / qtz) \\
\midrule
PPE-MATH (512) & 4 & 0.303 / 0.310 & 37.0\% / 37.0\% & 31.8\% / 31.8\% \\
RewardBench2 (300) & 4 & 0.630 / 0.633 & 19.4\% / 19.4\% & 86.7\% / 86.7\% \\
HelpSteer2 (300) & 2 & 0.618 / 0.618 & 31.7\% / 31.7\% & 49.5\% / 49.5\% \\
\bottomrule
\end{tabular}
\end{table}

Across all three datasets, explicit quantization leaves within-prompt correlation, tie rates, and recovery essentially unchanged. This supports the same conclusion as the main ablation: the level-vs-decision gap is not just a continuous-scoring artifact.

\subsection{Strict Round-Robin Pairwise Audit Under Token Budgets}
\label{sec:round-robin-budget}

Our main pairwise result (\Cref{tab:pairwise}) uses matched best-of-2 pairs and shows large gains from forcing decisions. To test a stricter deployment setting, we ran a full best-of-4 round-robin audit on the same prompt set with GPT-4.1-mini as judge: pointwise (4 calls/prompt) versus pairwise full round-robin (6 calls/prompt), and a token-matched partial pairwise simulation (4 edges/prompt).

\begin{table}[h]
\centering
\small
\caption{Round-robin best-of-4 audit with explicit token accounting (GPT-4.1-mini: 100 prompts; GPT-5: 40 prompts).}
\label{tab:round-robin-budget}
\resizebox{\textwidth}{!}{
\begin{tabular}{llrrr}
\toprule
\textbf{Judge} & \textbf{Setting} & \textbf{Recovery} & \textbf{Total tokens} & \textbf{Gain per 1k tokens} \\
\midrule
\multirow{3}{*}{GPT-4.1-mini} & Pointwise (4 calls/prompt) & 26.2\% & 199,087 & 0.01055 \\
& Pairwise full RR (6 calls/prompt) & 12.3\% & 478,549 & 0.00206 \\
& Pairwise token-matched partial (4 edges/prompt) & 14.5\% & same as pointwise & --- \\
\midrule
\multirow{3}{*}{GPT-5 (pilot)} & Pointwise (4 calls/prompt) & 79.8\% & 180,389 & 0.01303 \\
& Pairwise full RR (6 calls/prompt) & 68.6\% & 370,950 & 0.00545 \\
& Pairwise token-matched partial (4 edges/prompt) & 73.2\% & same as pointwise & --- \\
\bottomrule
\end{tabular}
}
\end{table}

Across both judges, full round-robin pairwise underperforms pointwise on fixed-call recovery and token efficiency in this strict best-of-4 setting. For GPT-5, pairwise narrows the gap under token-matched partial edges but still does not exceed pointwise in this pilot. Combined with \Cref{tab:pairwise}, this indicates that pairwise gains are \emph{model- and regime-dependent}: forcing comparisons can unlock hidden signal, but only when pairwise judgments are sufficiently accurate for the task and budget.

\section{Can We Route to Proxy Oracle When the Judge Is Uncertain?}
\label{sec:mitigation}

A natural mitigation is \emph{oracle routing}: let the judge decide when confidence is high, and query the expensive oracle when extra information is likely to help. We test whether simple confidence signals can identify those high-value prompts.

\subsection{Margin-Based Routing Fails}

We test this by comparing margin-based allocation (query lowest-margin prompts) against random allocation at fixed oracle budgets.

\begin{table}[h]
\centering
\caption{Proxy oracle routing results: margin-based vs random allocation.}
\label{tab:routing}
\begin{tabular}{lrrrr}
\toprule
\textbf{Budget} & \textbf{Random} & \textbf{Margin} & \textbf{Gain-Optimal} & \textbf{Headroom} \\
\midrule
10\% & 0.785 & 0.785 & 0.814 & 101\% \\
25\% & 0.795 & 0.796 & 0.834 & 97\% \\
50\% & 0.812 & 0.814 & 0.845 & 92\% \\
\bottomrule
\end{tabular}
\end{table}

Margin-based routing captures almost none of the available gain (headroom $\approx$ 92--101\%). In this setting, random allocation is essentially as good as margin-only routing.

\subsection{Why Margin Fails: The Cancellation Identity}

Expected gain from querying the oracle decomposes as:
\begin{equation}
\E[\text{gain} \mid W] = \Pr(\text{judge wrong} \mid W) \times \E[\text{gap} \mid \text{wrong}, W].
\end{equation}

For margin to predict gain, it must correlate with at least one factor. We find:

\begin{enumerate}
    \item \textbf{Higher margin $\to$ judge slightly more often correct} ($r = +0.063$): a weak effect.
    \item \textbf{Higher margin $\to$ lower oracle-best value} ($r = -0.134$): some ``confident'' prompts are just uniformly low-quality prompts where all candidates are mediocre.
\end{enumerate}

When the judge is wrong on a high-margin prompt, the loss is often larger. That larger loss roughly cancels the lower error rate, yielding $\text{Corr}(\text{margin}, \text{gain}) = -0.033$.

\paragraph{The U-shaped pattern.} $\E[\text{gain} \mid \text{margin}]$ is approximately U-shaped: highest at the extremes (ties and very high margins), lowest in the middle. This reflects a mixture of regimes:
\begin{itemize}[leftmargin=*]
    \item \textbf{Easy prompts}: clear winner, judge confident and correct $\to$ high margin, low gain.
    \item \textbf{Hard prompts}: all mediocre, judge latches onto a heuristic $\to$ high margin, but if wrong, large gap.
    \item \textbf{Ambiguous prompts}: no strong signal $\to$ low margin, moderate gap.
\end{itemize}

Margin alone cannot separate ``confident because easy'' from ``confident but wrong on a hard prompt.''

\subsection{Adding a Difficulty Proxy Helps Somewhat}

If high margin means different things on easy vs hard prompts, we need a second feature to separate them. Mean judge score across candidates serves as a difficulty proxy (low mean $\to$ hard prompt).

\begin{table}[h]
\centering
\caption{$\E[\text{gain}]$ by margin and mean judge level.}
\label{tab:2d-gain}
\begin{tabular}{lrrrr}
\toprule
 & \multicolumn{4}{c}{\textbf{Mean Judge Level}} \\
\textbf{Margin} & $<$0.7 & 0.7--0.8 & 0.8--0.9 & $>$0.9 \\
\midrule
0 (tie) & 0.124 & 0.092 & 0.081 & 0.052 \\
0--0.05 & --- & --- & 0.044 & 0.043 \\
0.05--0.10 & 0.127 & 0.055 & 0.044 & 0.036 \\
$>$0.10 & 0.063 & 0.076 & 0.103 & --- \\
\bottomrule
\end{tabular}
\end{table}

The pattern is clearer with two features: low mean level (hard prompts) tends to have high expected gain across margins. This 2D feature explains 2.8\% of gain variance vs 0.1\% for margin alone, a 25$\times$ improvement, though still modest.

\begin{table}[h]
\centering
\caption{Routing with 2D features (margin + level).}
\label{tab:2d-routing}
\begin{tabular}{lrrrrr}
\toprule
\textbf{Budget} & \textbf{Random} & \textbf{Margin} & \textbf{2D Model} & \textbf{Optimal} & \textbf{\% Captured} \\
\midrule
10\% & 0.785 & 0.785 & 0.788 & 0.814 & 11\% \\
25\% & 0.795 & 0.795 & 0.802 & 0.834 & 18\% \\
50\% & 0.812 & 0.814 & 0.822 & 0.845 & 31\% \\
\bottomrule
\end{tabular}
\end{table}

The 2D router captures 11--31\% of the optimal gain, a meaningful improvement over margin alone (which captures $\approx$0\%), but substantial headroom remains.

\subsection{Practical Recommendations}

\begin{enumerate}[leftmargin=*]
    \item \textbf{Don't route on margin alone.} The intuition ``query when uncertain'' fails because margin conflates uncertainty with prompt difficulty.
    \item \textbf{Add a difficulty proxy.} Combining margin with mean score (or similar) partially disentangles the signal.
    \item \textbf{Consider improving the sensor.} Better uncertainty estimates may come from:
    \begin{itemize}
        \item Multiple judge samples (disagreement as uncertainty)
        \item Ensemble of different judges
        \item Continuous scores instead of 20-bin quantization
    \end{itemize}
    \item \textbf{Random allocation is a strong baseline.} With current observables, random allocation is near-optimal for the inference problem and only modestly suboptimal for routing.
\end{enumerate}

\paragraph{Key insight.} Confidence and oracle value are different targets. The judge may be somewhat calibrated about ``am I right?'' but poorly calibrated about ``how much would oracle intervention help?'' Margin mostly tracks the first, not the second.

\subsection{Can Elicited Uncertainty Rescue Routing?}

The analysis above uses \emph{implicit} uncertainty (margin derived from point estimates). What if we explicitly ask the judge for uncertainty? We test two approaches:

\begin{enumerate}
    \item \textbf{Resampling}: Generate $K=5$ independent scores per response; use standard deviation as uncertainty.
    \item \textbf{Self-reported CIs}: Ask the judge to provide 95\% confidence intervals alongside point estimates.
\end{enumerate}

\begin{table}[h]
\centering
\caption{Per-prompt correlation of uncertainty measures with gain (N=500 prompts, 95\% bootstrap CIs).}
\label{tab:uncertainty-corr}
\begin{tabular}{lrl}
\toprule
\textbf{Uncertainty Measure} & \textbf{Corr($U$, gain)} & \textbf{95\% CI} \\
\midrule
Margin (baseline) & $+0.01$ & $[-0.10, +0.11]$ \\
Resample std & $+0.26$ & $[+0.16, +0.36]$ \\
CI width & $+0.26$ & $[+0.15, +0.36]$ \\
\bottomrule
\end{tabular}
\end{table}

Both resampling and CI elicitation yield $r \approx 0.26$, substantially stronger than margin's near-zero correlation. This suggests the judge \emph{can} express useful uncertainty, but we need to elicit it directly or estimate it from repeated samples. That direction is consistent with recent work that routes uncertain reward-model cases to a stronger judge rather than trusting a weak proxy uniformly \cite{xu2025uncertainjudge}. The routing table below uses this same 500-prompt subset, so absolute values are not directly comparable to earlier full-dataset routing tables.

\begin{table}[h]
\centering
\caption{Routing with elicited uncertainty on the 500-prompt uncertainty subset (25\% proxy oracle budget).}
\label{tab:uncertainty-routing}
\begin{tabular}{lrrr}
\toprule
\textbf{Method} & \textbf{Value} & \textbf{Lift vs Random} & \textbf{\% of Optimal} \\
\midrule
Random & 0.798 & --- & 0\% \\
Margin & 0.800 & +0.2\% & 5\% \\
CI Width & 0.808 & +1.2\% & 26\% \\
Optimal & 0.836 & +4.7\% & 100\% \\
\bottomrule
\end{tabular}
\end{table}

CI-width routing captures 26\% of the optimal gain, compared to margin's 5\%. This is a meaningful improvement, though it comes with costs: eliciting CIs requires modified prompts and roughly doubles the output tokens. Resampling achieves similar routing performance without requiring prompt modifications.

\paragraph{Additional benefits of resampling.}
\begin{itemize}[leftmargin=*]
    \item \textbf{Tie elimination}: Original judge has 99\% top-1 tie rate (margin = 0) on our 500-prompt subsample; resampled means reduce this to 29\%.
    \item \textbf{Accuracy boost}: On this 500-prompt pilot, using resampled means instead of single-pass scores improves accuracy from 48.4\% to 51.4\% (+6.2\% relative).
\end{itemize}

\paragraph{Adaptive resampling.} Full resampling ($K=5$) is expensive. We test an adaptive variant: (1) score each candidate once, (2) if margin $\geq 0.10$, stop (clear winner), (3) otherwise resample only the top-2 candidates until margin exceeds threshold or query cap ($K_{\max}=3$) is reached. Simulating on the 500-prompt pilot data:

\begin{center}
\begin{tabular}{lrrrr}
\toprule
\textbf{Method} & \textbf{Accuracy} & \textbf{Queries/prompt} & \textbf{Benefit} & \textbf{Cost} \\
\midrule
$K=1$ (baseline) & 48.4\% & 4.0 & 0\% & 0\% \\
Adaptive & 51.0\% & 7.7 & 87\% & 23\% \\
$K=5$ (full) & 51.4\% & 20.0 & 100\% & 100\% \\
\bottomrule
\end{tabular}
\end{center}

Adaptive resampling achieves \textbf{87\% of full resampling's benefit at 23\% of the cost}, a 3.8$\times$ efficiency gain. This makes resampling practical even when judge calls are expensive.

\paragraph{Why does CI width work when margin fails?}

We decompose $\E[\text{gain} \mid U]$ into its two factors for each uncertainty measure $U$:

\begin{table}[h]
\centering
\small
\caption[Decomposition of gain into error rate and gap]{Decomposition of gain: $\E[\text{gain} \mid U] = \Pr(\text{wrong} \mid U) \times \E[\text{gap} \mid \text{wrong}, U]$. Correlations are computed across quintile bins of each uncertainty measure.\protect\footnotemark}
\label{tab:decomposition}
\begin{tabular}{lcccl}
\toprule
\textbf{Uncertainty $U$} & \textbf{Corr($U$, wrong)} & \textbf{Corr($U$, gap$|$wrong)} & \textbf{Corr($U$, gain)} & \textbf{Mechanism} \\
\midrule
Margin & N/A & N/A & $\approx 0$ & No variance\textsuperscript{$\dagger$} \\
CI width & $+0.33$ & $+0.97$ & $+0.96$ & Reinforcement \\
Resample std & $-0.22$ & $+0.98$ & $+0.97$ & Gap dominates \\
\bottomrule
\end{tabular}
\normalsize
\end{table}
\footnotetext{Bin-level correlations (N=5 quintiles) are higher than per-prompt correlations (Table~\ref{tab:uncertainty-corr}, $r \approx 0.26$) because binning averages out noise. Both confirm the same pattern: CI width and resample std predict gain; margin does not.}

\textsuperscript{$\dagger$}Although only 67\% of random within-prompt pairs tie, the \emph{top-1 margin} (max score $-$ second max) is zero 99\% of the time because coarse score bins make it rare for one candidate to score strictly higher than all three others. This makes margin effectively constant and unusable as a routing signal.

CI width exhibits \emph{reinforcement}: higher CI width predicts both (1) the judge is more likely wrong ($r = +0.08$ per-prompt), and (2) the cost of being wrong is larger ($r = +0.34$ per-prompt among wrong cases). Both factors push expected gain in the same direction, yielding a monotone relationship: prompts in the top CI-width quintile have $3.6\times$ higher expected gain than those in the bottom quintile.

Resample std shows partial cancellation (higher std correlates with \emph{fewer} errors), but the gap-when-wrong effect is so strong ($r = 0.98$) that it overwhelms the error-rate effect, still yielding useful routing signal.

\paragraph{Holdout validation.} To verify these results are not overfit, we split the 500 prompts into 350 train / 150 test. The correlations generalize: CI width achieves $r = 0.25$ on train and $r = 0.31$ on test (even stronger). At 25\% oracle budget on the held-out test set, CI-width routing captures 25\% of optimal gain, confirming the signal is real.

\paragraph{Reasoning versus uncertainty.} A natural concern is that CI elicitation simply causes the judge to deliberate more, and any routing signal comes from this extra reasoning rather than uncertainty per se. We test this by comparing three modes: (1) score only (6 tokens), (2) ``think step by step'' + score (287 tokens), and (3) score + CI (21 tokens). On 300 prompts, reasoning \emph{hurts} accuracy (39\% $\to$ 35\%) despite 45$\times$ more tokens, while CI elicitation maintains accuracy and provides routing signal ($r = +0.10$ for CI width vs $r \approx 0$ for reasoning margin). The signal appears to come from uncertainty rather than deliberation.

\paragraph{Calibration caveat.} The judge's self-reported 95\% CIs achieve only 74\% coverage (CIs are too narrow), so numeric uncertainty calibration is imperfect. Still, CI width remains directionally useful for routing. This mirrors the paper's central theme: absolute calibration can be weak while relative decision signal remains usable.

\paragraph{Recommendation.} When routing value justifies extra cost, CI-width routing is much better than margin routing. Point estimates hide uncertainty that can be recovered with explicit elicitation or resampling.

\section{Efficient Estimation Under Partial Labels}
\label{sec:efficient}

Oracle labels are expensive. Can we estimate recovery rate reliably from a subset of labeled prompts? We develop doubly robust estimators following semiparametric efficiency theory \cite{tsiatis2006semiparametric,chernozhukov2018double} (Appendix~\ref{app:eif}) and test whether margin-based oracle allocation provides practical variance reduction.

\subsection{DR Estimation Under Subsampling}

Given budget $b \in (0, 1]$, we query oracle labels for a fraction $b$ of prompts. The AIPW estimator (Eq.~\ref{eq:aipw}) combines a propensity model $\hat{\pi}(W)$ with an outcome model $\hat{m}_\delta(W)$:
\begin{equation}
\widehat{V}(\delta) = \frac{1}{n} \sum_{t=1}^{n} \left[ \widehat{m}_\delta(W_t) + \frac{R_t}{\widehat{\pi}(W_t)} \Big( O_{\delta(W_t)} - \widehat{m}_\delta(W_t) \Big) \right].
\end{equation}

Table~\ref{tab:subsample} shows that the DR estimator remains close to its full-data estimate under partial labels, with appropriately widening confidence intervals as budget decreases.

\begin{table}[h]
\centering
\caption{DR estimation of recovery rate under partial proxy oracle labels (random allocation).}
\label{tab:subsample}
\begin{tabular}{lrrr}
\toprule
\textbf{Budget} & \textbf{Recovery} & \textbf{95\% CI} & \textbf{ESS} \\
\midrule
100\% (full) & 21.0\% & [18.8\%, 23.2\%] & 5000 \\
50\% & 20.0\% & [17.0\%, 23.0\%] & 2545 \\
25\% & 22.2\% & [18.1\%, 26.2\%] & 1287 \\
10\% & 23.2\% & [17.3\%, 29.1\%] & 492 \\
\bottomrule
\end{tabular}
\end{table}

Across budgets from 10\% to 100\%, the DR estimate remains near the full-data estimate (21.0\%), and every interval still excludes 0\% (the random-baseline recovery level). The qualitative finding is unchanged: the judge has weak but nonzero signal, while lower budgets produce noisier point estimates and wider intervals.

\subsection{Does Margin-Based Allocation Reduce Variance?}

The EIF-optimal oracle allocation is $\pi^*(W) \propto \sqrt{\Var(O_{\delta(W)} \mid W)}$: query prompts where outcome uncertainty is highest (Appendix~\ref{app:eif}). We hypothesized that margin-based routing, which queries where the judge is uncertain, approximates this optimal design.

\paragraph{Empirical test.} We compare three allocation strategies:
\begin{enumerate}
    \item \textbf{Random}: $\pi(W) = b$ (uniform)
    \item \textbf{Margin-based}: $\pi(W)$ higher for low-margin prompts
    \item \textbf{Oracle-optimal}: $\pi^*(W) \propto \sqrt{\text{oracle spread}}$ (oracle-informed upper bound)
\end{enumerate}

Table~\ref{tab:allocation} shows the variance ratio (relative to random) for each design:

\begin{table}[h]
\centering
\caption{Variance ratio of allocation strategies vs.\ random (200 simulations).}
\label{tab:allocation}
\begin{tabular}{lrrr}
\toprule
\textbf{Budget} & \textbf{Random} & \textbf{Margin} & \textbf{Optimal} \\
\midrule
25\% & 1.00 & 1.20 & 0.55 \\
50\% & 1.00 & 2.00 & 0.23 \\
\bottomrule
\end{tabular}
\end{table}

\paragraph{Key findings.}
\begin{enumerate}
    \item \textbf{Oracle-optimal allocation achieves large variance reduction}: 45--77\% reduction vs.\ random, confirming the EIF theory works when properly targeted.
    \item \textbf{Margin-based allocation does not reduce variance}: Variance ratio $>1$ indicates margin is a poor proxy for outcome uncertainty in this dataset.
    \item \textbf{The correlation is wrong-signed}: $\text{Corr}(\text{margin}, \text{oracle spread}) = 0.162$ (positive, not negative). EIF optimality requires low margin $\to$ high variance; our data shows the opposite.
\end{enumerate}

\subsection{A Fundamental Limit: The Judge Is Not Calibrated About Its Errors}

The EIF-optimal design requires predicting where the outcome model fails, specifically $\Var(O_{\delta(W)} - m(W) \mid W)$, the residual variance after conditioning on observables.

\paragraph{The proxy chain.} To target high-residual prompts, we need observable features that predict residual variance. The signal degrades through a chain of proxies:
\begin{center}
\begin{tabular}{lcc}
\toprule
\textbf{Relationship} & \textbf{Correlation} & \textbf{Variance Explained} \\
\midrule
judge\_spread $\to$ oracle\_spread & $r = 0.360$ & 13.0\% \\
oracle\_spread $\to$ residual$^2$ & $r = 0.258$ & 6.7\% \\
judge\_spread $\to$ residual$^2$ & $r = 0.160$ & 2.5\% \\
\bottomrule
\end{tabular}
\end{center}

With only 2.5\% of residual variance predictable, the theoretical maximum variance reduction is $0.025 \times 60\% \approx 1.5\%$, which is completely swamped by estimation noise.

\paragraph{The deeper issue.} Achieving a 20\% variance reduction would require
$\text{Corr}(\text{observable}, \text{residual}^2) \geq 0.58$. Our best observable reaches
only $r = 0.16$ (a 3.6$\times$ gap).

In this dataset, this reveals a key empirical property of the judge: \textbf{it is not well calibrated about its own errors}. A well-calibrated judge would be uncertain (low margin) precisely where it is likely to be wrong. Here, errors are close to random with respect to confidence.

\subsection{Connection to Decision Routing}

The analysis above concerns \emph{inference}: efficiently estimating decision value. A distinct problem is \emph{decision routing}: which prompts to query to maximize realized utility. \Cref{sec:mitigation} shows that margin-based routing fails there as well, for a related but distinct reason: $\E[\text{gain} \mid \text{margin}]$ is approximately flat due to confounding between margin and prompt difficulty.

\subsection{Practical Recommendation}

\textbf{Use uniform random subsampling with DR estimation.} At 25\% budget, random allocation provides unbiased estimates with reasonable precision. With the observables we tested, the gap between achievable and optimal variance reduction remains large.

Margin-based strategies fail for \emph{both} problems:
\begin{itemize}
    \item \textbf{Inference}: Margin explains only 2.5\% of residual variance.
    \item \textbf{Decision routing}: $\E[\text{gain} \mid \text{margin}]$ is not decreasing (\Cref{sec:mitigation}).
\end{itemize}

In this dataset, the judge is not well calibrated about its own errors in either sense. Random allocation is simple, unbiased, and achieves nearly the same efficiency as the observable targeting strategies we evaluated.

\section{Related Work}
\label{sec:related}

LLM-as-a-judge benchmarks such as MT-Bench and Chatbot Arena popularized aggregate agreement metrics for evaluator quality \cite{zheng2023judging,chiang2024chatbotarena}. Judge frameworks such as G-Eval, Prometheus, and JudgeLM largely optimize the same target \cite{liu2023geval,kim2023prometheus,zhu2023judgelm}. More recent meta-evaluation work has made the boundary sharper. Benchmarks such as LLMBar and JUDGE-BENCH show that judge quality depends strongly on example type and task, even when average agreement looks respectable \cite{zeng2024llmbar,bavaresco2025judgebench}. Work on automatic system ranking similarly finds that a judge's usefulness for model-level ranking need not align with instance-level best-output choice \cite{gao2025systemranking}. New 2026 reliability diagnostics push further, using item-response models and targeted stress tests to measure judge consistency under prompt variation and ordinal-grading pressure \cite{choi2026irtjudge,dev2026jrh}. Our contribution targets that deployment mismatch directly: not ``does the judge agree on average,'' but ``does it provide usable within-prompt signal for best-of-$n$ decisions?''

This focus is also consistent with the growing literature on judge bias and brittleness. Prior work documents fairness, position, and cognitive-bias effects in LLM evaluators, and shows that fine-tuned judge models can look strong in narrow settings while failing to generalize as reliable substitutes for stronger frontier judges \cite{wang2024fair,koo2024cobbler,chen2024judgebias,huang2025empirical}. We build on that line, but ask a different question: even if average agreement is acceptable, does the judge induce the right decision for a single prompt?

Our pairwise results sit between two strands of recent evidence. Pairwise prompting and pairwise preference elicitation can improve comparative ranking quality relative to coarse pointwise scoring \cite{qin2024pairwise}. But pairwise comparison is not a free fix: recent work also shows that comparative setups can amplify evaluator biases or behave differently across tasks \cite{jeong2025comparative}. This matches our empirical pattern: pairwise helps substantially in matched best-of-2, but does not automatically win in strict best-of-4 budgeted audits.

The deployment setting we analyze (optimize by selecting high-scored candidates) is central to RLHF and verifier workflows \cite{christiano2017deep,stiennon2020learning,ouyang2022training,cobbe2021training}. Our framing is complementary to Causal Judge Evaluation (CJE) \cite{landesberg2026cje}. CJE targets \emph{level validity} for policy-level comparisons under transport, while we target \emph{directional/decision validity} for instance-level selection. A judge can satisfy one and fail the other; both diagnostics are needed for deployment.

The decomposition also connects to classic multilevel/statistical cautions: aggregate correlation can mislead about within-group relationships (Simpson/ecological effects) \cite{simpson1951interpretation,robinson1950ecological,king1997solution}, and hierarchical modeling emphasizes explicit within-vs-between decomposition \cite{raudenbush2002hierarchical}. In the optimization setting, this appears as strong global agreement driven by shared context effects while within-set ranking remains weak.

Finally, our findings align with recent reward-model evaluation concerns. Reward overoptimization and Goodhart effects show that proxy optimization can fail even when proxies look strong globally \cite{gao2023scaling,goodhart1984problems,manheim2018categorizing}. RewardBench, recent work on reward-model evaluation, and overoptimization-focused analyses all strengthen the same motivation for deployment-matched diagnostics rather than relying on one headline metric \cite{lambert2024rewardbench,ppe2024,kim2025rethinking}. Recent listwise evaluation work makes the same deployment-matching point explicitly for judge benchmarks, arguing that multi-candidate instruction-following evaluation is a better proxy for optimization use than pairwise-only setups \cite{wen2026ifrewardbench}. Our paper isolates an earlier bottleneck in that pipeline: low within-prompt signal and heavy ties can limit best-of-$n$ utility before aggressive optimization begins.

\section{Bootstrap Details}
\label{app:bootstrap}

All confidence intervals use cluster bootstrap with the prompt as the unit of resampling. We resample prompts with replacement ($n = 5,000$), compute the statistic on each bootstrap sample, and report the 2.5th and 97.5th percentiles as the 95\% CI. This accounts for within-prompt dependence across the four similar candidates used in the main best-of-4 benchmark.

\section{Additional Stratifications}
\label{app:stratifications}

The judge shows heterogeneous reliability across conditions:
\begin{itemize}[leftmargin=*]
    \item \textbf{Response length}: Accuracy is 63.7\% on short responses vs 72.8\% on long responses (+9.1\%).
    \item \textbf{Judge margin}: Higher margins weakly predict correctness, but not routing value.
    \item \textbf{Length bias}: Length does not systematically bias judge errors (null result).
\end{itemize}

\section{Recovery Requirements Curve}
\label{app:recovery}

\begin{figure}[h]
\centering
\includegraphics[width=0.8\textwidth]{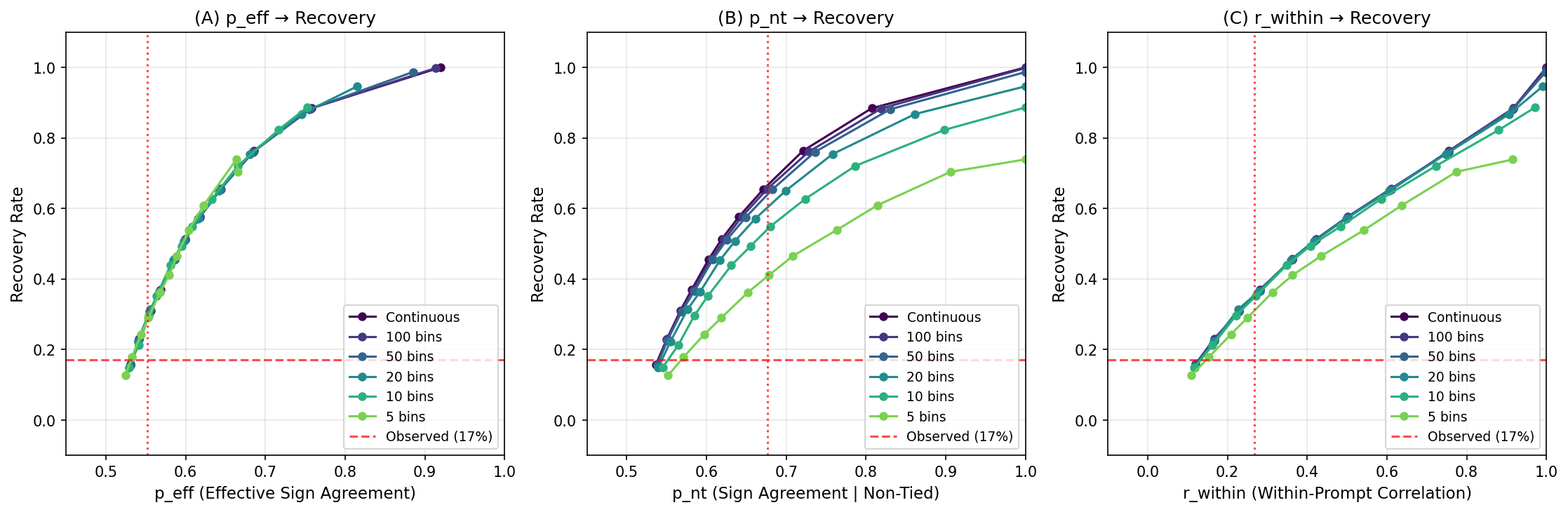}
\caption{What judge quality is needed for $X$\% recovery? The curve shows required $\peff$ and $\rwithin$ as a function of target recovery rate. Current judge ($\rwithin = 0.27$) achieves 21.0\% recovery; reaching 50\% recovery would require $\rwithin \approx 0.42$.}
\label{fig:recovery-requirements}
\end{figure}

\begin{table}[h]
\centering
\caption{Recovery requirements.}
\label{tab:recovery-requirements}
\begin{tabular}{lrr}
\toprule
\textbf{Target Recovery} & \textbf{Required $\peff$} & \textbf{Required $\rwithin$} \\
\midrule
25\% & 54\% & 0.17 \\
50\% & 60\% & 0.42 \\
75\% & 69\% & 0.76 \\
90\% & 76\% & 0.92 \\
\bottomrule
\end{tabular}
\end{table}

\section{Effect of Discretization}
\label{app:discretization}

At fixed underlying noise level, coarser discretization reduces recovery:

\begin{table}[h]
\centering
\caption{Effect of discretization on recovery.}
\label{tab:discretization}
\begin{tabular}{lrr}
\toprule
\textbf{Bins} & \textbf{$\peff$} & \textbf{Recovery} \\
\midrule
Continuous & 61.8\% & 57.6\% \\
100 & 61.8\% & 57.3\% \\
20 & 61.6\% & 54.9\% \\
10 & 60.9\% & 51.1\% \\
5 & 58.9\% & 40.8\% \\
\bottomrule
\end{tabular}
\end{table}

This confirms that the judge's 20-bin discretization is a meaningful bottleneck.

\section{Confidence Elicitation Methods}
\label{app:confidence}

For pairwise head-to-head judgments (and pairwise components inside best-of-$n$ systems), confidence-based routing could improve performance by trusting only high-confidence judgments. We compared three methods for obtaining judge confidence on 490 stratified pairs.

\subsection{Methods Compared}

\begin{enumerate}
    \item \textbf{Self-reported confidence} (1--5 scale): Ask the judge to rate its confidence alongside its choice.
    \item \textbf{Token logprobs}: Extract $P(A)$ and $P(B)$ from the decoding distribution at temperature 1.0, compute confidence as $\max(P_A, P_B) / (P_A + P_B)$.
    \item \textbf{Elicited probability}: Ask the judge directly: ``What is the probability that A is better than B?'' and parse the numeric response.
\end{enumerate}

\subsection{Calibration Results}

We summarize calibration by binning stated/implied probabilities and comparing each bin's mean stated probability to its observed A-win rate. The scalar summary reported below is the unweighted mean of those bin-level absolute errors.

\begin{table}[h]
\centering
\caption{Confidence elicitation calibration (490 pairs).}
\label{tab:confidence-calibration}
\begin{tabular}{lrrr}
\toprule
\textbf{Method} & \textbf{Stated P(A)} & \textbf{Observed A-win} & \textbf{Error} \\
\midrule
\multicolumn{4}{l}{\textit{Elicited P(A$>$B)}} \\
\quad $P \in [0.0, 0.2]$ & 0.17 & 0.12 & 0.05 \\
\quad $P \in [0.2, 0.4]$ & 0.29 & 0.41 & 0.12 \\
\quad $P \in [0.4, 0.6]$ & 0.43 & 0.53 & 0.10 \\
\quad $P \in [0.6, 0.8]$ & 0.69 & 0.46 & 0.23 \\
\quad $P \in [0.8, 1.0]$ & 0.88 & 0.76 & 0.12 \\
\midrule
\multicolumn{4}{l}{\textit{Logprob P(A)}} \\
\quad $P \in [0.0, 0.2]$ & 0.03 & 0.31 & 0.29 \\
\quad $P \in [0.2, 0.4]$ & 0.31 & 0.57 & 0.26 \\
\quad $P \in [0.4, 0.6]$ & 0.50 & 0.38 & 0.11 \\
\quad $P \in [0.6, 0.8]$ & 0.69 & 0.36 & 0.33 \\
\quad $P \in [0.8, 1.0]$ & 0.98 & 0.61 & 0.37 \\
\bottomrule
\end{tabular}
\end{table}

\paragraph{Key finding.} Elicited probabilities achieve lower mean bin-level calibration error than logprobs (0.12 vs 0.27, a 2.3$\times$ improvement). The logprob distribution is highly bimodal (most predictions are $P > 0.8$ or $P < 0.2$), reflecting overconfidence at decoding time. Elicited probabilities show more spread and better track true base rates.

\paragraph{Why logprobs fail.} Token-level probabilities reflect the model's confidence \emph{after} it has committed to an answer format. When asked to output just ``A'' or ``B'', the model is very confident in its choice even when that choice is wrong. In contrast, asking for an explicit probability estimate forces deliberation about uncertainty.

\paragraph{Practical recommendation.} For confidence-based routing in pairwise judging, elicit explicit probability estimates rather than relying on token logprobs or self-reported confidence scales, consistent with prior calibration work \cite{kadavath2022language,tian2023justask}.

\section{Formal Results: When Correlation Fails to Predict Recovery}
\label{app:formal}

We formalize two key insights: (1) global correlation does not identify recovery, and (2) optimal routing depends on a quantity that may be unpredictable from observables.

\subsection{Non-Identifiability of Recovery from Global Correlation}

\begin{proposition}[Non-Identifiability]
\label{prop:nonident}
For any global correlation $r \in (0, 1)$ and any recovery targets $\rho_1, \rho_2 \in [0, 1]$ with $\rho_1 < \rho_2$, there exist data-generating processes $P_1$ and $P_2$ such that:
\begin{enumerate}
    \item $\mathrm{Corr}_{P_1}(S, O) = \mathrm{Corr}_{P_2}(S, O) = r$
    \item $\mathrm{Recovery}(P_1) = \rho_1$ and $\mathrm{Recovery}(P_2) = \rho_2$
\end{enumerate}
\end{proposition}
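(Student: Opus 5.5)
The plan is to build a one-parameter family of data-generating processes in which the global correlation stays fixed while recovery can be set anywhere in $[0,1]$. Following the within--between decomposition of \Cref{sec:metrics}, I write $S_{x,i}=\mu^S_x+\varepsilon^S_{x,i}$ and $O_{x,i}=\mu^O_x+\varepsilon^O_{x,i}$ and take $n=2$ candidates per prompt; best-of-$n$ with larger $n$ works the same way. I let the prompt effects $(\mu^S_x,\mu^O_x)$ be bivariate normal with variances $\sigma_\mu^2$ and correlation $c$. Independently of them, I let the within-prompt deviations be generated as follows. The oracle deviations are $\varepsilon^O_{x,1}=-\varepsilon^O_{x,2}=\pm\delta$ with a fair random sign. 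The judge deviations are $\varepsilon^S_{x,i}=\varepsilon^O_{x,i}$ with probability $q$ and $-\varepsilon^O_{x,i}$ with probability $1-q$, drawn independently per prompt.

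\textbf{Recovery is controlled by $q$ alone.} Selection depends only on the $\varepsilon$'s, because the $\mu$'s cancel within a prompt. This is the content of the decomposition, and it can be checked directly. Oracle selection gains $\delta$ over random, and the judge gains $(2q-1)\delta$. Hence $\mathrm{Recovery}=2q-1$, and choosing $q=(1+\rho)/2\in[1/2,1]$ realizes any $\rho\in[0,1]$.

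\textbf{Global correlation.} Next I compute the global correlation, which is a bookkeeping step. The $\mu$'s and $\varepsilon$'s are independent and the $\varepsilon$'s have mean zero, so
\[
\Cov(S,O)=c\,\sigma_\mu^2+(2q-1)\delta^2,\qquad \Var(S)=\Var(O)=\sigma_\mu^2+\delta^2 .
\]
This gives
\[
\Corr(S,O)=\frac{c\,\sigma_\mu^2+\rho\,\delta^2}{\sigma_\mu^2+\delta^2}.
\]
Writing $w=\sigma_\mu^2/(\sigma_\mu^2+\delta^2)\in(0,1)$, the correlation is $wc+(1-w)\rho$. To hit a target $r$, I fix $\rho\in\{\rho_1,\rho_2\}$ and solve for $c=(r-(1-w)\rho)/w$. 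A valid solution needs $c\in[-1,1]$, which holds when $w$ is close enough to $1$: as $w\to1$ the required $c$ tends to $r\in(0,1)$. So for each $\rho_j$ I pick $w$ near $1$, i.e.\ between-prompt variance dominating, exactly the regime of \Cref{tab:variance}, and set $c_j$ accordingly. I can even use a single common $w$ for both processes, taking $w>\max\{\,\cdots\,\}$ so that both $c_1$ and $c_2$ lie in $(-1,1)$.

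\textbf{Expected obstacle.} I expect the main obstacle to be range and edge cases rather than depth. First, the construction must respect the paper's conventions, for example $S,O\in[0,1]$. This can be handled by replacing the Gaussian prompt effects with bounded two-point variables and rescaling, since correlation and recovery are invariant to positive affine maps applied jointly to all candidates. Second, recovery must be well defined, which requires $\E[O_{\text{oracle}}]>\E[O_{\text{random}}]$; this holds because $\delta>0$. Third, ties are excluded, since $\varepsilon^S$ is never zero. I would present the two-candidate construction explicitly, note that $\rho_1$ and $\rho_2$ enter only through $q$, and conclude that $P_1$ and $P_2$ share $r$ but differ in recovery.
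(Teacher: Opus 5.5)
Your proposal is correct and takes the same route as the paper's proof sketch. Both split the global covariance into a between-prompt term and a within-prompt term, let within-prompt coupling alone determine recovery (the prompt effects cancel in the argmax and in the recovery differences), and use the between-prompt component to absorb the slack so that both processes have global correlation $r$. Your version is more complete than the paper's, which only asserts that ``appropriate variance scaling'' works and reaches recovery $\approx 0$ rather than exactly $\rho_1$: your $\pm\delta$ two-candidate design gives $\mathrm{Recovery}=2q-1$ exactly and $\mathrm{Corr}(S,O)=wc+(1-w)\rho$ explicitly, and fixing $w$ near $1$ while solving for the between-prompt correlation $c$ cleanly guarantees $c\in[-1,1]$.
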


\begin{proof}[Proof sketch]
Consider the multilevel decomposition:
\begin{align}
S_{x,i} &= \mu^S_x + \varepsilon^S_{x,i}, \\
O_{x,i} &= \mu^O_x + \varepsilon^O_{x,i}.
\end{align}

Global correlation decomposes as:
\[
\mathrm{Corr}(S, O) = \frac{\mathrm{Cov}(\mu^S, \mu^O) + \mathrm{Cov}(\varepsilon^S, \varepsilon^O)}{\sqrt{\mathrm{Var}(S) \cdot \mathrm{Var}(O)}}.
\]

Recovery depends on within-prompt ranking, which is governed by $\varepsilon^S, \varepsilon^O$ and tie structure. By varying the ratio of between-prompt to within-prompt variance while holding the weighted sum constant, we can construct processes with identical global $r$ but arbitrarily different within-prompt coupling $r_{\text{within}} = \mathrm{Corr}(\varepsilon^S, \varepsilon^O)$.

For example, let $P_1$ have high between-prompt correlation but zero within-prompt coupling (recovery $\approx 0$), and let $P_2$ have moderate both (high recovery). Both can achieve the same global $r$ through appropriate variance scaling.
\end{proof}

\paragraph{Implication.} Global correlation alone cannot predict decision utility. Audits must separately measure within-prompt metrics.

\subsection{Optimal Routing and Its Limits}

Consider the routing problem: given budget $b$, choose query probability $\pi(W)$ to maximize:
\[
\max_{\pi: \E[\pi(W)] \leq b} \E\big[Y_1 + \pi(W) \cdot (Y_2 - Y_1)\big],
\]
where $Y_1$ is oracle value of judge-selected response and $Y_2$ is oracle-best value.

\begin{proposition}[Optimal Routing]
\label{prop:routing}
Define $\Delta(W) = \E[Y_2 - Y_1 \mid W]$. The optimal router is:
\[
\pi^*(W) = \mathbf{1}\{\Delta(W) \geq \tau\},
\]
where threshold $\tau$ is chosen to satisfy the budget constraint $\E[\pi^*(W)] = b$.
\end{proposition}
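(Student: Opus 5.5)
The plan is to treat this as a fractional knapsack problem, a continuous linear program in $\pi$, and settle it with a pointwise exchange (Neyman--Pearson style) argument.

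\emph{Step 1: reduce the objective to one depending only on $\Delta$.} By the tower property, and since $\pi(W)$ is $W$-measurable,
\[
\E[Y_1 + \pi(W)(Y_2 - Y_1)] = \E[Y_1] + \E[\pi(W)\Delta(W)].
\]
The first term does not depend on $\pi$. The problem therefore becomes: maximize $\E[\pi(W)\Delta(W)]$ over measurable $\pi: \mathcal{W}\to[0,1]$ subject to $\E[\pi(W)] \le b$. Note that $\Delta(W)\ge 0$ always, because $Y_2$ is the oracle-best value and so $Y_2 \ge Y_1$ pointwise. Hence using the full budget never hurts, and we may take the constraint to bind.

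\emph{Step 2: choose the threshold.} Let $\tau$ be the upper $b$-quantile of $\Delta(W)$, i.e.\ $\tau = \inf\{t : \Pr(\Delta(W) > t) \le b\}$. If $\Delta(W)$ has an atom at $\tau$, the indicator rule may not meet $\E[\pi^*]=b$ exactly. I will handle this the standard way: randomize on the set $\{\Delta = \tau\}$, setting $\pi^* = \gamma$ there with $\gamma$ chosen so the budget binds. When the distribution of $\Delta(W)$ is continuous at $\tau$ this recovers exactly the stated indicator rule. I will state the result with that caveat, since the atom case is the main technical wrinkle in making ``$\tau$ chosen so that $\E[\pi^*] = b$'' literally correct.

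\emph{Step 3: pointwise exchange inequality.} For any feasible $\pi$, I will show
\[
(\pi^*(W) - \pi(W))(\Delta(W) - \tau) \ge 0 \quad \text{pointwise.}
\]
This holds because $\pi^* = 1 \ge \pi$ where $\Delta > \tau$, $\pi^* = 0 \le \pi$ where $\Delta < \tau$, and the product vanishes where $\Delta = \tau$. Taking expectations gives
\[
\E[\pi^*\Delta] - \E[\pi\Delta] \ge \tau\bigl(\E[\pi^*] - \E[\pi]\bigr) = \tau\bigl(b - \E[\pi]\bigr) \ge 0,
\]
where the last inequality uses $\tau \ge 0$ (since $\Delta \ge 0$) and $\E[\pi] \le b$. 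This proves optimality of $\pi^*$.

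\emph{Step 4: uniqueness.} Equality in Step 3 forces $\pi = \pi^*$ almost surely off $\{\Delta = \tau\}$, so the threshold rule is essentially unique.

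The only real obstacle is the quantile and atom bookkeeping in Step 2, including the degenerate case $\tau = 0$, where any allocation on $\{\Delta = 0\}$ is harmless. The rest is a one-line Lagrangian/exchange argument.
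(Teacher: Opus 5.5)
Your proof is correct and follows the paper's route: conditioning on $W$ makes the objective linear in $\pi$, and the resulting fractional knapsack is solved by thresholding $\Delta(W)$. Your pointwise exchange inequality supplies the justification the paper only asserts (``greedy is equivalent to thresholding''). Your randomization on $\{\Delta(W)=\tau\}$ also correctly handles the atom case, which arises for example over a finite set of prompts; there no pure indicator rule need satisfy $\E[\pi^*(W)]=b$ exactly.
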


\begin{proof}
The objective is linear in $\pi(W)$:
\[
\E[Y_1] + \E[\pi(W) \cdot \Delta(W)].
\]
Maximizing $\E[\pi(W) \cdot \Delta(W)]$ subject to $\E[\pi(W)] \leq b$ is a knapsack problem. The greedy solution queries prompts in decreasing order of $\Delta(W)$ until budget is exhausted, which is equivalent to thresholding.
\end{proof}

\begin{corollary}[Routing Bound]
\label{cor:bound}
Let $\sigma^2_\Delta = \mathrm{Var}(\Delta(W))$. The maximum improvement of any router over random allocation is:
\[
V(\pi^*) - V(\pi_{\text{rand}}) \leq \sqrt{\sigma^2_\Delta} \cdot \sqrt{b(1-b)}.
\]
If $\sigma^2_\Delta \approx 0$, all routers achieve approximately the same value as random allocation.
\end{corollary}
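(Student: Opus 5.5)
We prove Corollary~\ref{cor:bound} by writing the routing gain as a covariance and then applying Cauchy--Schwarz. By Proposition~\ref{prop:routing}, the value of any router $\pi$ is $V(\pi) = \E[Y_1] + \E[\pi(W)\Delta(W)]$. Random allocation uses $\pi_{\text{rand}}(W) \equiv b$, so $V(\pi_{\text{rand}}) = \E[Y_1] + b\,\E[\Delta(W)]$. For any feasible $\pi$ with $\E[\pi(W)] = b$, subtracting gives
\[
V(\pi) - V(\pi_{\text{rand}}) = \E\big[(\pi(W) - b)\,\Delta(W)\big] = \Cov\big(\pi(W), \Delta(W)\big),
\]
because $\pi(W) - b$ has mean zero. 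If the constraint is slack, $\E[\pi]<b$, we first argue that $\pi^*$ saturates the budget when $\Delta \ge 0$; this holds because $Y_2$ is the oracle-best value, so $Y_2 \ge Y_1$ pointwise. It therefore suffices to bound this covariance for routers with $\E[\pi]=b$.

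Next we apply Cauchy--Schwarz:
\[
\big|\Cov(\pi(W),\Delta(W))\big| \le \sqrt{\Var(\pi(W))}\,\sqrt{\Var(\Delta(W))} = \sigma_\Delta \sqrt{\Var(\pi(W))}.
\]
Then we bound $\Var(\pi(W))$ using only $\pi \in [0,1]$ and $\E[\pi] = b$. Since $\pi^2 \le \pi$, we get $\E[\pi^2] \le \E[\pi] = b$, and hence $\Var(\pi) \le b - b^2 = b(1-b)$. Equality holds exactly for deterministic threshold rules like $\pi^*$, which are Bernoulli($b$) in distribution. Combining the two inequalities gives $V(\pi^*) - V(\pi_{\text{rand}}) \le \sigma_\Delta\sqrt{b(1-b)}$. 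When $\sigma^2_\Delta \approx 0$ the right side vanishes, which gives the second claim. The same bound in fact covers every router, not only $\pi^*$.

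The main subtlety is the budget bookkeeping in the first step. The problem says $\E[\pi(W)] \le b$, while Proposition~\ref{prop:routing} sets the threshold so that $\E[\pi^*] = b$. With continuous $\Delta(W)$ and $\Delta \ge 0$ this is immediate. If $\Delta(W)$ has atoms, an exact threshold may not exist, and we would allow randomization at the threshold level, $\pi^* \in [0,1]$ on $\{\Delta=\tau\}$. The variance bound $\Var(\pi)\le b(1-b)$ still holds in that case, since it uses only $0\le\pi\le1$, so the argument is unchanged.
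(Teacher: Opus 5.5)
Your argument is correct. The paper states this corollary without proof (only Proposition~\ref{prop:routing} is proved), and your route is exactly what the form $\sigma_\Delta\sqrt{b(1-b)}$ encodes: write the gain of a budget-exhausting router as $\Cov(\pi(W),\Delta(W))$, apply Cauchy--Schwarz, and bound $\Var(\pi(W))\le b(1-b)$ via $\pi^2\le\pi$. Your use of $\Delta\ge 0$ for the slack-budget case and of randomization at atoms fills in details the paper leaves implicit.

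One caveat concerns only how to read the corollary's second sentence: take it as ``no router improves much on random.'' Your two-sided bound $|V(\pi)-V(\pi_{\text{rand}})|\le\sigma_\Delta\sqrt{b(1-b)}$ covers routers with $\E[\pi]=b$. An under-spending router such as $\pi\equiv 0$, however, falls short of random by $b\,\E[\Delta]$ regardless of $\sigma_\Delta$.
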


\paragraph{Application to our setting.} We observe $\mathrm{Var}(\Delta(W)) / \mathrm{Var}(Y_2 - Y_1) \approx 0.03$ when $W$ includes margin and level. This means observables explain only 3\% of gain variance, severely limiting any router's improvement over random.

\subsection{The Value-of-Information Calibration Gap}

Define two calibration properties:

\begin{definition}[Confidence Calibration]
A judge is \emph{confidence-calibrated} if $\Pr(\text{correct} \mid \text{margin} = m)$ is monotone increasing in $m$.
\end{definition}

\begin{definition}[VOI Calibration]
A judge is \emph{VOI-calibrated} (value-of-information calibrated) if $\E[\text{gain} \mid \text{margin} = m]$ is monotone decreasing in $m$.
\end{definition}

\begin{proposition}[Calibration Gap]
\label{prop:calib}
Confidence calibration does not imply VOI calibration. Specifically, if margin correlates with prompt difficulty (harder prompts $\to$ higher margin), then a confidence-calibrated judge can fail VOI calibration.
\end{proposition}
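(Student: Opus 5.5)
The claim is existential: we must exhibit one data-generating process whose judge is confidence-calibrated but not VOI-calibrated. I will build it as an explicit two-component mixture of prompt types, echoing the ``easy'' versus ``hard'' regimes of \Cref{sec:mitigation}. Let a latent difficulty $H\in\{\text{easy},\text{hard}\}$ be drawn per prompt, and take best-of-$2$ for simplicity, with gain $=O_{\text{oracle}}-O_{\text{judge}}$. Conditional on $H$, I specify the margin distribution, the probability the judge is correct given margin, $c_H(m)$, and the oracle gap when wrong, $g_H$, a constant per type. Then
\[
\E[\text{gain}\mid \text{margin}=m] = \sum_{h} \Pr(H=h\mid m)\,\bigl(1-c_h(m)\bigr)\,g_h ,
\]
which is the cancellation identity of \Cref{sec:mitigation}, now conditioned on the difficulty type.

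\textbf{Step 1 (confidence calibration).} Choose $c_{\text{easy}}(m)=c_{\text{hard}}(m)=c(m)$, a single increasing function such as $c(m)=\tfrac12+\tfrac{m}{4}$ on $m\in[0,1]$. Then $\Pr(\text{correct}\mid m)=c(m)$ for every mixture weight, so the judge is confidence-calibrated by construction, and this holds independently of how margin correlates with difficulty.

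\textbf{Step 2 (margin--difficulty coupling).} Implement ``harder prompts $\to$ higher margin'' by letting $\Pr(H=\text{hard}\mid m)=q(m)$ be increasing, e.g.\ $q(m)=m$. Concretely, hard prompts draw margin from a density proportional to $m$, and easy prompts from a density proportional to $1-m$. Set $g_{\text{easy}}=\epsilon$ (close candidates) and $g_{\text{hard}}=G\gg\epsilon$, matching the ``all mediocre, large gap when wrong'' story. The conditional expected gain is then
\[
f(m)=(1-c(m))\bigl[(1-q(m))\epsilon+q(m)G\bigr].
\]

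\textbf{Step 3 (violation of VOI calibration).} I need $f$ to be non-decreasing somewhere. With the choices above, $f(m)=\bigl(\tfrac12-\tfrac m4\bigr)\bigl(\epsilon+m(G-\epsilon)\bigr)$, and
\[
f'(0)=\tfrac12(G-\epsilon)-\tfrac14\epsilon>0 \quad\text{whenever } G>\tfrac32\epsilon .
\]
Hence $f$ is increasing near $0$, so the judge is not VOI-calibrated.

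\textbf{Step 4 (realizing the specification).} Finally, I instantiate the specification with actual scores $S,O$ for two candidates, so that the stated margin and correctness probabilities are genuinely induced. Given $H$ and $m$, set judge scores with difference $m$, and make the oracle agree in sign with probability $c(m)$ with gap $g_H$. Only the conditional quantities of Steps~1--3 need checking; this is a routine verification.

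The main obstacle is Step 1: keeping confidence calibration exact while the mixture weight shifts with $m$. Choosing $c_h$ independent of $h$ sidesteps this. If one instead insists on the more natural assumption that hard prompts have lower accuracy, Step 1 must be checked by computing $\sum_h q_h(m)c_h(m)$ and choosing the $c_h$ steep enough that this average remains increasing. I would fall back to that only if a more ``realistic'' witness were desired.
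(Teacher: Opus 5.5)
Your proposal is correct and uses the same mechanism as the paper. Both rest on the factorization $\E[\text{gain}\mid m]=\Pr(\text{wrong}\mid m)\,\E[\text{gap}\mid \text{wrong},m]$: confidence calibration makes the first factor decrease, while hard (large-gap) prompts concentrated at high margin make the second increase. The only difference is rigor. The paper just asserts that such a product ``can'' be non-monotone and cites weak empirical correlations. Your explicit mixture with $c(m)=\tfrac12+\tfrac m4$, $q(m)=m$, and $f'(0)>0$ for $G>\tfrac32\epsilon$ actually exhibits a witness. (The identity $q(m)=m$ tacitly assumes equal prior weight on the two types, but any nondegenerate prior works once $G/\epsilon$ is large enough.)
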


\begin{proof}
By the decomposition:
\[
\E[\text{gain} \mid m] = \Pr(\text{wrong} \mid m) \times \E[\text{gap} \mid \text{wrong}, m].
\]

Confidence calibration ensures $\Pr(\text{wrong} \mid m)$ decreases with $m$. But if $\E[\text{gap} \mid \text{wrong}, m]$ increases with $m$ (because high-margin prompts are harder, leaving more room for improvement), the product can be non-monotone or even increasing.

In our data: $\mathrm{Corr}(\text{margin}, \text{correct}) = +0.063$ (confidence-calibrated), but $\mathrm{Corr}(\text{margin}, \text{gap} \mid \text{wrong}) = +0.03$ (harder prompts), yielding $\mathrm{Corr}(\text{margin}, \text{gain}) = -0.033 \approx 0$.
\end{proof}

\paragraph{Implication.} A judge can ``know when it is uncertain'' (confidence calibration) without ``knowing when querying the oracle helps'' (VOI calibration). These are distinct properties, and margin captures the former but not the latter.

\section{Formal Statement: Level vs Directional Validity}
\label{app:level-directional}

We formalize the distinction between level validity (addressed by CJE) and directional validity (this paper).

\paragraph{Decomposition.} Let $S_{x,i}$ denote the judge score and $O_{x,i}$ the oracle label for prompt $x$ and candidate $i$:
\begin{align}
    S_{x,i} &= \mu^S_x + \varepsilon^S_{x,i} \quad \text{(Judge: prompt mean + candidate deviation)} \\
    O_{x,i} &= \mu^O_x + \varepsilon^O_{x,i} \quad \text{(Oracle: prompt mean + candidate deviation)}
\end{align}

\paragraph{Global correlation} conflates two sources:
\begin{itemize}
    \item $\mathrm{Corr}(\mu^S, \mu^O)$ (agreement on prompt difficulty; between-prompt)
    \item $\mathrm{Corr}(\varepsilon^S, \varepsilon^O)$ (agreement on within-prompt quality; within-prompt)
\end{itemize}

\begin{definition}[Level Validity]
A judge satisfies level validity if $\E_\pi[f(S)] = \E_\pi[Y]$ for some calibration function $f$. That is, calibrated judge scores yield unbiased policy value estimates.
\end{definition}

\begin{definition}[Directional Validity]
A judge satisfies directional validity if $\mathrm{Corr}(\varepsilon^S_{x,i}, \varepsilon^O_{x,i}) > \tau$ for some threshold $\tau$. That is, within-prompt rankings are correct with sufficient reliability.
\end{definition}

\begin{theorem}[Independence of Validity Levels]
\label{thm:independence-level-directional}
Level validity and directional validity are independent properties:
\begin{enumerate}
    \item A judge can satisfy level validity without directional validity: averaging over prompts cancels within-prompt errors, yielding unbiased policy estimates despite poor instance-level decisions.
    \item A judge can satisfy directional validity without level validity: correct within-prompt rankings do not imply calibrated absolute scores.
\end{enumerate}
\end{theorem}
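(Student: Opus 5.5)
The plan is to prove both parts by explicit construction within the two-level model $S_{x,i}=\mu^S_x+\varepsilon^S_{x,i}$, $O_{x,i}=\mu^O_x+\varepsilon^O_{x,i}$. Since the definitions are stated loosely, I would first fix them. Level validity will mean that for every policy $\pi$ in a specified class (policies that choose a candidate index by a rule not depending on the residuals), there is a fixed calibration $f$ with $\E_\pi[f(S)]=\E_\pi[O]$. Directional validity will mean $\rwithin>\tau$ for a given $\tau\in(0,1)$. Independence then amounts to exhibiting two data-generating processes, one for each direction.

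\emph{Part 1 (level without direction).} Take $\mu^S_x=\mu^O_x=\mu_x$ with $\Var(\mu_x)>0$. Take $\varepsilon^S_{x,i}$ and $\varepsilon^O_{x,i}$ to be mutually independent, mean zero, i.i.d.\ across $(x,i)$, and independent of $\mu_x$. Then $\rwithin=0\le\tau$, so directional validity fails. Consistent with Proposition~\ref{prop:nonident}, $\argmax_i S_{x,i}$ is independent of the oracle residuals, so recovery is $0$. With $f=\mathrm{id}$ and any policy whose candidate choice depends only on $x$ and $i$, we have $\E_\pi[S]=\E[\mu_x]=\E_\pi[O]$, because both residuals have conditional mean zero given the chosen index. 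So level validity holds: the within-prompt errors average out. I would note that the policy class must exclude selection on $S$ itself, since the judge-greedy policy does not have this cancellation property. This is a genuine scoping condition and should be stated explicitly.

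\emph{Part 2 (direction without level).} Take an oracle with $O_{x,i}=\mu^O_x+\varepsilon^O_{x,i}$ and set $S_{x,i}=c+g(\mu^O_x)+\varepsilon^O_{x,i}$, where $g$ destroys level information. For example, let $g\equiv 0$, so the judge scores are the within-prompt oracle deviations plus a constant. Then $\rwithin=1>\tau$. However, $f(S)$ depends on $\mu^O_x$ only through nothing, so no $f$ can make $\E_\pi[f(S)]=\E_\pi[O]$ hold simultaneously for two policies that differ only in the prompts they face. More simply, consider two policies $\pi_1,\pi_2$ with equal distributions of $S$ but different $\E[\mu^O_x]$. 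Any $f$ gives $\E_{\pi_1}[f(S)]=\E_{\pi_2}[f(S)]$ while $\E_{\pi_1}[O]\neq\E_{\pi_2}[O]$, so level validity fails for every $f$. A variant that avoids distributional shift is to use a non-monotone relabeling of prompt means, $S=\mu^S_x+\varepsilon^O$ with $\mu^S$ independent of $\mu^O$.

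The main obstacle is definitional rather than computational. "Level validity" involves a calibration $f$ and a policy $\pi$. If $f$ may depend on the policy, or if only a single fixed policy is considered, then level validity can be trivially satisfied by choosing $f$ constant equal to $\E_\pi[O]$, and Part 2 collapses. I would therefore first fix the quantifiers: one $f$ that must work across a class of at least two policies inducing different prompt or candidate distributions, in the CJE transport sense. The Part 2 counterexample is then designed specifically to defeat every such $f$, while Part 1 is checked to hold uniformly over that same class.
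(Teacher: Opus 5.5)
\textbf{Comparison with the paper.} Your route differs from the paper's. The paper gives no formal proof of this theorem. It follows the statement with an informal argument: the policy-value estimator has variance of order $1/n$, so within-prompt noise averages out across prompts, while each best-of-$n$ decision is made once per prompt. It adds the empirical 74\%/81\% between-prompt variance split. Part~2 is only asserted: correct rankings do not imply calibrated absolute scores.

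That heuristic speaks to variance and consistency. It does not address whether some $f$ achieves $\E_\pi[f(S)]=\E_\pi[O]$. And ``raw scores are uncalibrated'' is not the same as ``level validity fails,'' since $f$ is allowed to recalibrate. Your approach of fixing quantifiers and exhibiting explicit data-generating processes is therefore more informative. You are right that the literal definition (some $f$ for a given $\pi$) is met by a constant $f$, which would make Part~2 false as written. Your Part~1 is sound. Excluding $S$-selecting policies is a genuine scoping condition: judge-greedy picks the largest $\varepsilon^S$, so $f=\mathrm{id}$ overstates its value.

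\textbf{The gap in Part~2.} In both of your constructions $S_{x,i}-O_{x,i}=h(x)$ depends only on the prompt: $h(x)=c-\mu^O_x$ for $g\equiv 0$, and $h(x)=\mu^S_x-\mu^O_x$ for the variant. Suppose all policies face a common prompt distribution, as in the paper's setting, where policies are candidate sources answering the same prompts. Then $\E_\pi[S]-\E_\pi[O]=\E[h(X)]$ for every candidate-choosing policy. So the single map $f(s)=s-\E[h(X)]$ is level-valid for the whole class.

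Two consequences follow. Your ``variant that avoids distributional shift'' actually satisfies level validity, so that claim is wrong. Your main construction fails level validity only because $\pi_1,\pi_2$ see different prompt distributions, which departs from how the paper compares policies. More generally, exact $\rwithin=1$ means $\varepsilon^S=k\varepsilon^O$. On a common prompt distribution this always admits the affine calibration $f(s)=s/k-\E[\mu^S_X/k-\mu^O_X]$. So you need candidate-level distortion, with $\rwithin$ above $\tau<1$ but not equal to $1$.

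\textbf{A construction that works on a common prompt distribution.} Take two candidates $a,b$ and two equally likely prompt types. Set $S=1$ for the oracle-better candidate and $S=0$ otherwise. Let $a$ win type~1 by oracle gap $d_1$ and $b$ win type~2 by gap $d_2\neq d_1$.
\begin{itemize}
\item Both ``always $a$'' and ``always $b$'' have $S\sim\mathrm{Bernoulli}(1/2)$, so every $f$ gives them equal calibrated means.
\item Yet $\E[O_a]-\E[O_b]=(d_1-d_2)/2\neq 0$, so no single $f$ is level-valid.
\item Within-prompt rankings are perfect, so recovery is 100\%.
\item $\rwithin=(d_1+d_2)/\sqrt{2(d_1^2+d_2^2)}$, which exceeds any $\tau<1$ once $d_2/d_1$ is close to $1$.
\end{itemize}
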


\paragraph{Why Averaging Helps Policy Estimation But Not Instance Selection.}

For policy-level estimation:
\[
\mathrm{Var}(\hat{V}(\pi)) = \frac{1}{n^2} \sum_x \mathrm{Var}(S_{x,\pi(x)}) = \frac{1}{n} \times \text{Average}(\mathrm{Var}(S_{x,i})) \to 0 \text{ as } n \to \infty
\]
This is the standard $\sqrt{n}$ convergence. Even with large individual variance, the policy mean converges.

For instance-level selection at a single prompt $x$:
\[
P(\text{correct} \mid x) = f(\rwithin, \text{tie rate})
\]
There is no averaging: each prompt is a single decision. The full within-prompt noise directly determines decision quality.

\paragraph{Empirical Variance Decomposition (This Paper).}

\begin{table}[h]
\centering
\begin{tabular}{lrr}
\toprule
\textbf{Component} & \textbf{Judge} & \textbf{Oracle} \\
\midrule
Between-prompt variance & 74\% & 81\% \\
Within-prompt variance & 26\% & 19\% \\
\bottomrule
\end{tabular}
\end{table}

The high between-prompt variance creates global correlation ($r = 0.47$) without implying within-prompt agreement ($\rwithin = 0.27$). This is the structure that allows level validity to coexist with directional failure.

\section{Monotone Calibration Preserves Rankings}
\label{app:monotone}

A natural question is whether score calibration could repair directional validity failures. The following lemma formalizes why monotone calibration methods, including the isotonic calibration used by CJE, cannot improve within-prompt rankings.

\begin{lemma}[Monotone Calibration Preserves Argmax]
\label{lem:monotone}
Let $f: \mathbb{R} \to \mathbb{R}$ be strictly monotone increasing. For any set of scores $\{S_i\}_{i=1}^n$:
\[
\arg\max_i f(S_i) = \arg\max_i S_i
\]
except for tie-breaking when $S_i = S_j$ for some $i \neq j$.
\end{lemma}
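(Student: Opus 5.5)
The argument is a direct order-preservation check. The key fact is that a strictly increasing $f$ is an order isomorphism onto its image: $S_i < S_j \iff f(S_i) < f(S_j)$, and $S_i = S_j \iff f(S_i) = f(S_j)$. Both directions are needed, so I will establish them explicitly. The forward direction of the strict inequality is the definition of strict monotonicity. For the converse, suppose $f(S_i) < f(S_j)$ but $S_i \ge S_j$. If $S_i > S_j$, then $f(S_i) > f(S_j)$. If $S_i = S_j$, then $f(S_i) = f(S_j)$. Either case is a contradiction. Injectivity, and hence the equality statement, follows the same way.

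Next I show that the \emph{set} of maximizers is identical: $M_S = \{i : S_i \ge S_j \ \forall j\}$ and $M_f = \{i : f(S_i) \ge f(S_j)\ \forall j\}$. Index $i$ lies in $M_S$ iff $S_j \le S_i$ for all $j$. Applying the equivalence above (non-strict form, obtained by combining the strict and equality cases), this holds iff $f(S_j) \le f(S_i)$ for all $j$, i.e.\ iff $i \in M_f$. So $\arg\max_i f(S_i) = \arg\max_i S_i$ as sets, with no exceptions. Since $n$ is finite, both sets are nonempty.

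Finally I interpret the ``except for tie-breaking'' clause. When $|M_S| = 1$, the selected index is literally the same. When $|M_S| > 1$ (tied maxima, $S_i = S_j$), the paper's selection rule breaks ties uniformly at random. The tied set is the same before and after applying $f$, so the \emph{distribution} of the selected index is unchanged. Only the realized draw may differ, and that is exactly the stated exception. It follows that judge-greedy $O_{\text{judge}}$, top-1 accuracy, and Recovery are invariant in expectation under strictly monotone calibration.

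I would also extend the remark to all pairwise comparisons. Sign agreement, $\pnt$, $\peff$ and $\tauwithin$ depend only on the signs of $S_i - S_j$, which $f$ preserves, so these metrics are invariant as well. This is what the empirical isotonic result in Section~\ref{sec:multi-judge} reflects.

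The only delicate point, and the one I expect to be the main obstacle, is that isotonic regression is in practice only \emph{weakly} monotone. It can merge distinct scores into a tie, which lies outside the lemma's hypothesis. I would note that in this case $M_f \supseteq M_S$. Calibration can then only add ties, which never improves, and generally degrades, directional decisions. This is consistent with the $\approx -0.3$pp recovery change reported.
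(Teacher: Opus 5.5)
Your proof is correct and is essentially the paper's direct order-preservation argument, made slightly sharper. The paper shows that a maximizer of $S$ remains a maximizer of $f(S)$, with equality only on $S$-ties. Your two-sided equivalence $S_j \le S_i \iff f(S_j) \le f(S_i)$ gives $M_f = M_S$ outright, so the tie-breaking caveat only concerns the realized random draw.

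One caution on your closing aside, which lies outside the lemma's hypothesis. For weakly monotone $f$ the inclusion $M_f \supseteq M_S$ is right, but the claim that added ties ``never improve'' decisions is false. Pooling a lower-scored but oracle-better candidate into the top tie raises expected utility: $S=(0.8,0.7)$, $O=(0.2,0.9)$ gives $0.2$ before pooling and $0.55$ after. Moreover, isotonic fitting pools score levels where oracle means run against the judge's ordering. This is consistent with the slight \emph{increase} in sign agreement the paper reports under monotone calibration (\Cref{tab:calibration,tab:calibration-ablation}).
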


\begin{proof}
Suppose $i^* = \arg\max_i S_i$, so $S_{i^*} \geq S_j$ for all $j$. By strict monotonicity of $f$:
\begin{itemize}
    \item If $S_{i^*} > S_j$, then $f(S_{i^*}) > f(S_j)$
    \item If $S_{i^*} = S_j$, then $f(S_{i^*}) = f(S_j)$ (tie preserved)
\end{itemize}
Thus $f(S_{i^*}) \geq f(S_j)$ for all $j$, with equality only when $S_{i^*} = S_j$.
\end{proof}

\paragraph{Implications.} This lemma has three key consequences:

\begin{enumerate}
    \item \textbf{CJE compatibility:} CJE's \emph{monotone-only} isotonic calibration is monotone in the scalar score by construction. Lemma~\ref{lem:monotone} explains why this mode achieves level validity without affecting within-prompt rankings: calibration corrects scale but preserves ordering. CJE's two-stage calibration can incorporate response-varying covariates (e.g., length) and may re-rank candidates through the learned index; this can affect directional behavior, but is not guaranteed to improve it.

    \item \textbf{Directional validity is orthogonal:} No monotone transformation of judge scores can improve within-prompt accuracy. Directional validity must come from the judge's \emph{raw ranking ability}, not post-hoc calibration.

    \item \textbf{Ties as the exception:} The only case where calibration could affect rankings is tie-breaking. With 67\% tie rates, this exception covers the majority of decisions, but random tie-breaking cannot systematically improve accuracy.
\end{enumerate}

This formalizes a key insight: \textbf{level and directional validity are addressed by fundamentally different mechanisms}. Level validity can be repaired via calibration; directional validity cannot.

\section{Theoretical Baseline: Gaussian Recovery Formula}
\label{app:gaussian}

Under a clean ``noisy-but-linear'' within-prompt model, there is a closed-form relationship between within-prompt correlation and expected recovery.

\paragraph{Setup.} Within a prompt with $n$ candidates:
\begin{itemize}
    \item Oracle utilities $O_i \sim \mathcal{N}(0, 1)$ i.i.d.
    \item Judge scores $S_i = \rho \cdot O_i + \sqrt{1-\rho^2} \cdot Z_i$ where $Z_i \sim \mathcal{N}(0, 1)$ i.i.d.
    \item $\mathrm{Corr}(S_i, O_i) = \rho$ (within-prompt correlation)
\end{itemize}

\paragraph{Result.} Let the judge select $i^* = \arg\max_i S_i$. Then:
\[
\E[O_{i^*}] = \rho \cdot \E[\max_i O_i]
\]

If the random baseline has $\E[O] = 0$, then:
\[
\text{Recovery} = \frac{\E[O_{i^*}]}{\E[\max_i O_i]} = \rho
\]

\textbf{In the idealized Gaussian/no-tie world, expected best-of-$n$ recovery equals within-prompt correlation.}

\paragraph{Why This Matters.} This formula provides a clean theoretical baseline:
\begin{itemize}
    \item If the world were ``nice'' (continuous scores, Gaussian utilities, no ties), $\rwithin$ would directly predict recovery
    \item Deviations from this baseline are the \emph{explanandum}: ties, discretization, heteroskedasticity, nonlinearity, prompt-dependent noise
\end{itemize}

\paragraph{Our Empirical Results as Deviations.}

\begin{table}[h]
\centering
\begin{tabular}{lrrr}
\toprule
\textbf{Quantity} & \textbf{Theoretical} & \textbf{Empirical} & \textbf{Gap} \\
\midrule
$\rwithin$ & --- & 0.27 & --- \\
Recovery (if Gaussian) & 27\% & 21.0\% & $-$6.0\% \\
Tie rate & 0\% & 67\% & +67\% \\
\bottomrule
\end{tabular}
\end{table}

The 6.0 percentage point gap between theoretical (27\%) and empirical (21.0\%) recovery is explained by:
\begin{enumerate}
    \item \textbf{Discretization}: 67\% tie rate means most comparisons are uninformative
    \item \textbf{Non-Gaussian tails}: Extreme values may have different noise structure
    \item \textbf{Heteroskedasticity}: Judge noise may vary with oracle utility
\end{enumerate}

This baseline clarifies that our findings are not merely ``the judge is noisy''; they identify \emph{specific mechanisms} (discretization, ties) that degrade decision utility beyond what correlation alone would predict.

\section{Evaluation Prompts}
\label{app:prompts}

\subsection{Pointwise Judge Prompt}

The following system prompt is used for pointwise scoring (0--100 scale):

\begin{verbatim}
You are an expert AI evaluator. Rate the quality of the following
response to a user query on a scale from 0 to 100.

Consider:
- Helpfulness: Does the response address the user's request?
- Accuracy: Is the information correct and well-reasoned?
- Clarity: Is the response well-organized and easy to understand?
- Completeness: Does the response fully address the query?

Provide only a single integer score from 0 to 100, with no explanation.
\end{verbatim}

\subsection{Pairwise Judge Prompt}

The actual pairwise protocol used for Table~\ref{tab:pairwise} and the confidence analyses allowed \texttt{A}, \texttt{B}, or \texttt{TIE} and requested a confidence rating. The implementation enforces the same schema via structured parsing.

\begin{verbatim}
You are an AI evaluator. Compare two responses to the same user
question and decide which is better. You must always make a choice -
ties are not allowed unless the responses are truly indistinguishable.
\end{verbatim}

\begin{verbatim}
<task>
Compare the following two AI assistant responses to the same user
question. Decide which response is better based on relevance,
helpfulness, clarity, and completeness.
</task>

<question>
{prompt}
</question>

<response_a>
{response_a}
</response_a>

<response_b>
{response_b}
</response_b>

<instruction>
Which response is better? Respond with one of:
- "A" if Response A is better
- "B" if Response B is better
- "TIE" only if the responses are truly indistinguishable in quality

Also rate your confidence (1-5, where 5 is very confident).

Format your response EXACTLY as:
CHOICE: [A/B/TIE]
CONFIDENCE: [1-5]
</instruction>
\end{verbatim}

\subsection{Probability Elicitation Prompt}

For elicited probability estimates (used in confidence analysis):

\begin{verbatim}
You are an expert AI evaluator. Compare the following two responses
(A and B) to a user query.

What is the probability that Response A is better than Response B?

Provide only a single number between 0.0 and 1.0, with no explanation.
\end{verbatim}

\section{Multi-Judge Detailed Results}
\label{app:multi-judge}

Table~\ref{tab:multi-judge-full} shows complete metrics for all five judges evaluated in Section~\ref{sec:multi-judge}.

\begin{table}[h]
\centering
\caption{Full multi-judge results on the shared 500-prompt sample (486--500 complete prompts per judge after missing-score filtering).}
\label{tab:multi-judge-full}
\begin{tabular}{lrrrrr}
\toprule
\textbf{Judge} & \textbf{Global $r$} & \textbf{Within $r$} & \textbf{Tie Rate} & \textbf{Sign Agr} & \textbf{Recovery} \\
\midrule
GPT-5.2 & 0.872 & 0.701 & 24.8\% & 76.5\% & 69.4\% \\
Claude Sonnet 4 & 0.587 & 0.417 & 59.7\% & 77.7\% & 47.7\% \\
GPT-4.1-mini & 0.563 & 0.465 & 57.8\% & 74.8\% & 43.6\% \\
Gemini-2.5-flash & 0.465 & 0.269 & 55.2\% & 66.4\% & 23.8\% \\
Llama-3.3-70b & 0.308 & 0.232 & 60.7\% & 60.4\% & 18.6\% \\
\bottomrule
\end{tabular}
\end{table}

\paragraph{Calibration ablation.} We tested CJE's FlexibleCalibrator with three modes: monotone (isotonic regression), two-stage (spline + isotonic), and two-stage with response length as covariate. Pure monotone calibration improves level validity (global $r$) but not directional validity (recovery). Adding covariates can break ties and improve recovery, but at the cost of sign agreement; the tie-breaking is not quality-aligned:

\begin{table}[h]
\centering
\caption{Calibration ablation: average change from raw scores across all judges.}
\label{tab:calibration-ablation}
\begin{tabular}{lrrrr}
\toprule
\textbf{Mode} & \textbf{$\Delta$ Global $r$} & \textbf{$\Delta$ Within $r$} & \textbf{$\Delta$ Sign Agr} & \textbf{$\Delta$ Recovery} \\
\midrule
Monotone & +23\% & +10\% & +0.8\% & $-$0.3\% \\
Two-stage & +24\% & +10\% & +0.3\% & $-$0.3\% \\
Two-stage + length & +38\% & +17\% & $-$3.7\% & +2.6\% \\
\bottomrule
\end{tabular}
\end{table}

This does not contradict Lemma~\ref{lem:monotone}. The lemma is about rank preservation: a monotone transform cannot change within-prompt argmax decisions except through pre-existing ties. Pearson correlations such as global $r$ and within-prompt $r$ are not rank-invariant, so they can change under monotone rescaling even when the underlying rankings, and therefore PCS/Recovery absent tie effects, do not.

\section{Efficient Influence Functions for Decision Value Estimation}
\label{app:eif}

We develop the semiparametric efficiency theory for estimating decision value under partial oracle labeling \cite{tsiatis2006semiparametric}. This provides: (i) doubly robust estimators with valid confidence intervals, and (ii) a principled derivation of optimal oracle allocation. We test whether margin-based routing approximates the optimal design; empirically, it does not (\Cref{sec:efficient}).

\subsection{Setup and Notation}

For each prompt $x$, we observe $K$ candidate responses with judge scores $S_{1:K}$ and (potentially missing) oracle utilities $O_{1:K}$. Let $W = (x, S_{1:K}, \text{features})$ denote the fully observed covariates.

Oracle labels are expensive, so we only observe them when queried:
\begin{itemize}[leftmargin=*]
    \item $R \in \{0,1\}$: indicator for whether oracle was queried for this prompt
    \item $\pi(W) = \Pr(R=1 \mid W)$: query probability (known by design or modeled)
\end{itemize}

\paragraph{Observed data.} $(W, R, R \cdot O_{1:K})$ for $n$ prompts.

\paragraph{Assumption (MAR by design).} $R \perp O_{1:K} \mid W$. This holds when oracle allocation depends only on observed features (e.g., judge scores, margin).

\subsection{Decision Value Functional}

A \emph{selector} $\delta: W \mapsto \{1, \ldots, K\}$ maps observed features to a chosen candidate. Examples:
\begin{align}
    \delta_{\text{judge}}(W) &= \argmax_i S_i \quad \text{(judge-greedy)} \\
    \delta_{\text{rand}}(W) &\sim \text{Uniform}\{1,\ldots,K\} \quad \text{(random)} \\
    \delta_{\text{oracle}}(W) &= \argmax_i O_i \quad \text{(oracle-optimal, counterfactual)}
\end{align}

The \textbf{decision value} of selector $\delta$ is:
\begin{equation}
V(\delta) = \E[O_{\delta(W)}].
\end{equation}

This is the expected oracle utility when using $\delta$ for selection, exactly what best-of-$n$ optimizes.

\subsection{Efficient Influence Function for \texorpdfstring{$V(\delta)$}{V(delta)}}

Let $m_\delta(W) = \E[O_{\delta(W)} \mid W]$ be the conditional expectation of oracle utility given covariates.

\begin{proposition}[EIF for Decision Value]
Under the nonparametric MAR model, the efficient influence function for $V(\delta)$ is:
\begin{equation}
\phi_\delta(Z) = \frac{R}{\pi(W)}\Big(O_{\delta(W)} - m_\delta(W)\Big) + m_\delta(W) - V(\delta).
\end{equation}
\end{proposition}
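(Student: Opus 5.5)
The plan is the standard pathwise-derivative computation for a mean functional under missingness at random. Write $V(\delta)=\E[m_\delta(W)]$, which uses $R \perp O_{1:K} \mid W$ to identify $\E[O_{\delta(W)}\mid W]=\E[O_{\delta(W)}\mid W,R=1]$. Here $\delta$ is treated as a fixed measurable map of $W$; for a randomized selector such as $\delta_{\text{rand}}$ I fold the randomization into $W$ or average over it. The target is then a smooth functional of the observed-data law, which factorizes as
\[
p(W)\,p(R\mid W)\,p(O_{1:K}\mid W,R=1)^{R}.
\]
I will compute its EIF by differentiating along a regular parametric submodel $P_t$ with score $s(Z)=s_W(W)+s_R(R\mid W)+R\,s_O(O\mid W)$. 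Each component has conditional mean zero given its conditioning variables.

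The derivative of $V(\delta)$ along $P_t$ has two parts. The first comes from the marginal of $W$:
\[
\E\bigl[(m_\delta(W)-V(\delta))\,s_W(W)\bigr].
\]
The second comes from the outcome law:
\[
\E\bigl[\E[O_{\delta(W)}\,s_O(O\mid W)\mid W,R=1]\bigr].
\]
The propensity score $s_R$ does not enter, because $V$ does not depend on $p(R\mid W)$. That is exactly why the known-design and modeled-propensity cases yield the same nonparametric EIF.

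I then verify that $\phi_\delta$ is a gradient, i.e. that $\E[\phi_\delta\,s]$ reproduces the derivative for every $s$. The term $m_\delta(W)-V(\delta)$ pairs with $s_W$ to give the first piece. It is orthogonal to $s_R$ and to $R\,s_O$ by iterated expectations.

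The weighted residual $\tfrac{R}{\pi(W)}(O_{\delta(W)}-m_\delta(W))$ behaves as follows against each score component:
\begin{itemize}
\item It has conditional mean zero given $(W,R)$, so it is orthogonal to both $s_W$ and $s_R$.
\item Against $R\,s_O$ it gives
\[
\E\Bigl[\tfrac{R}{\pi(W)}\,\E\bigl[(O_{\delta(W)}-m_\delta)s_O\mid W,R=1\bigr]\Bigr]=\E\bigl[\E[O_{\delta(W)}s_O\mid W,R=1]\bigr].
\]
This uses $\E[R\mid W]=\pi(W)$ and the fact that $s_O$ has conditional mean zero, which lets $m_\delta$ drop out. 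The result is the second piece.
\end{itemize}

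Since the model is nonparametric, the tangent space is all of $L_2^0(P)$. The gradient is therefore unique and equals the EIF, which completes the argument.

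The main obstacle is the regularity bookkeeping rather than the algebra. Two conditions are needed:
\begin{itemize}
\item \textbf{Positivity.} We need $\pi(W)\ge c>0$ so that $\phi_\delta\in L_2$. I would state this as an assumption, since the statement implicitly requires it.
\item \textbf{Dependence of $\delta$ on $O$.} $\delta$ must not depend on $O$. This holds for $\delta_{\text{judge}}$ and $\delta_{\text{rand}}$. For $\delta_{\text{oracle}}$ I would instead read $O_{\delta(W)}$ as $\max_i O_i$, which is an ordinary function of the outcome vector, and the same computation goes through with $m(W)=\E[\max_iO_i\mid W]$.
\end{itemize}
I would add a remark making this reinterpretation explicit rather than silently treating the oracle selector as a function of $W$.
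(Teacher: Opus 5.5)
Your proposal is correct and follows the same route as the paper. The paper gives no derivation: it only says the result is the standard missing-data (AIPW) efficient influence function for the MAR mean of the outcome $O_{\delta(W)}$. Your pathwise-derivative check against the score components $s_W$, $s_R$, $R\,s_O$ is exactly the verification behind that standard result. Your positivity assumption, and your reading of $O_{\delta_{\text{oracle}}(W)}$ as $\max_i O_i$ with $m(W)=\E[\max_i O_i\mid W]$, are sensible clarifications that the paper leaves implicit.
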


This is the standard missing-data EIF instantiated on the decision functional.

\paragraph{Doubly robust estimator.} The AIPW estimator \cite{bang2005doubly,tsiatis2006semiparametric} based on this EIF is:
\begin{equation}
\label{eq:aipw}
\widehat{V}(\delta) = \frac{1}{n} \sum_{t=1}^{n} \left[ \widehat{m}_\delta(W_t) + \frac{R_t}{\widehat{\pi}(W_t)} \Big( O_{\delta(W_t)} - \widehat{m}_\delta(W_t) \Big) \right].
\end{equation}

\paragraph{Properties.}
\begin{itemize}[leftmargin=*]
    \item \textbf{Double robustness}: Consistent if either $\widehat{\pi}$ or $\widehat{m}_\delta$ is correctly specified.
    \item \textbf{Semiparametric efficiency}: Achieves the efficiency bound when both nuisances are estimated at $o(n^{-1/4})$ rates.
    \item \textbf{ML compatibility}: Cross-fitting yields $\sqrt{n}$-consistent inference with flexible ML estimators \cite{chernozhukov2018double}.
\end{itemize}

\subsection{EIF for Recovery Rate}

Recovery is defined as:
\begin{equation}
\text{Rec} = \frac{V(\delta_{\text{judge}}) - V(\delta_{\text{rand}})}{V(\delta_{\text{oracle}}) - V(\delta_{\text{rand}})} = \frac{\psi_1 - \psi_0}{\psi_2 - \psi_0} =: g(\psi_0, \psi_1, \psi_2).
\end{equation}

Each $\psi_k = V(\delta_k)$ has EIF $\phi_k$ of the form above. By the delta method, the EIF for recovery is:
\begin{equation}
\phi_{\text{Rec}} = \frac{\partial g}{\partial \psi_0} \phi_0 + \frac{\partial g}{\partial \psi_1} \phi_1 + \frac{\partial g}{\partial \psi_2} \phi_2,
\end{equation}
where the partial derivatives are:
\begin{align}
\frac{\partial g}{\partial \psi_1} &= \frac{1}{\psi_2 - \psi_0}, \\
\frac{\partial g}{\partial \psi_2} &= -\frac{\psi_1 - \psi_0}{(\psi_2 - \psi_0)^2}, \\
\frac{\partial g}{\partial \psi_0} &= \frac{\psi_1 - \psi_2}{(\psi_2 - \psi_0)^2}.
\end{align}

This yields efficient estimation and asymptotically valid confidence intervals for recovery under oracle subsampling.

\subsection{Optimal Proxy Oracle Allocation}

Consider estimating $\psi = \E[f(W, O)]$ with the DR estimator. The asymptotic variance depends on the oracle query probability $\pi(\cdot)$.

\begin{proposition}[Variance-Minimizing Design]
Subject to budget constraint $\E[\pi(W)] \leq b$, the variance-minimizing query probability is:
\begin{equation}
\pi^*(W) \propto \sqrt{\Var(f(W, O) \mid W)},
\end{equation}
truncated to $[0, 1]$ with a Lagrange multiplier to satisfy the budget.
\end{proposition}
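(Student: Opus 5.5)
We prove the Variance-Minimizing Design proposition by writing the asymptotic variance of the DR estimator as a function of $\pi$ and minimizing it pointwise under the budget constraint. The estimator targets $\psi = \E[f(W,O)]$ and is built from the EIF in the Proposition (EIF for Decision Value). Replacing $O_{\delta(W)}$ by $f(W,O)$, its influence function is
\[
\phi = \frac{R}{\pi(W)}\big(f(W,O) - m(W)\big) + m(W) - \psi,
\qquad m(W) = \E[f(W,O)\mid W].
\]
The asymptotic variance of $\sqrt{n}(\widehat\psi - \psi)$ is $\E[\phi^2]$.

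\textbf{Variance in terms of $\pi$.} We expand $\E[\phi^2]$ by conditioning on $W$ and using MAR, $R \perp O \mid W$.
\begin{itemize}
\item The cross term vanishes because $\E[f(W,O)-m(W)\mid W,R]=0$.
\item Since $R^2 = R$ and $\E[R\mid W]=\pi(W)$, the first term contributes $\E\big[\sigma^2(W)/\pi(W)\big]$, where $\sigma^2(W)=\Var(f(W,O)\mid W)$.
\end{itemize}
This gives
\[
\E[\phi^2] = \E\!\left[\frac{\sigma^2(W)}{\pi(W)}\right] + \Var(m(W)).
\]
The second term does not depend on $\pi$. So the design problem reduces to minimizing $\E[\sigma^2(W)/\pi(W)]$ over measurable $\pi: \mathcal W \to (0,1]$ subject to $\E[\pi(W)] \le b$.

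\textbf{Solving the convex program.} The objective is convex in $\pi$, since $t \mapsto \sigma^2/t$ is convex on $t>0$, and the constraint is linear. We therefore form the pointwise Lagrangian
\[
\frac{\sigma^2(W)}{\pi(W)} + \lambda\,\pi(W), \qquad \lambda \ge 0.
\]
Minimizing over $\pi(W)\in(0,1]$ gives the interior stationarity condition $-\sigma^2/\pi^2 + \lambda = 0$, that is, $\pi(W) = \sigma(W)/\sqrt{\lambda}$. With the box constraint this becomes $\pi^*(W)=\min\{1,\sigma(W)/\sqrt\lambda\}$, which is the stated $\pi^*\propto\sqrt{\Var(f(W,O)\mid W)}$ truncated to $[0,1]$.

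Complementary slackness pins down $\lambda$. The map $\lambda\mapsto\E[\min\{1,\sigma(W)/\sqrt\lambda\}]$ is continuous and nonincreasing, so by monotone convergence we can choose $\lambda$ so the budget binds, $\E[\pi^*(W)]=b$. The exception is $\E[\min\{1,\cdot\}]$ never dropping to $b$, meaning $\sigma>0$ on a set too small to use the whole budget; then $\pi^*$ saturates at $1$ where $\sigma>0$.

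As a sanity check without Lagrange multipliers, in the untruncated case Cauchy–Schwarz gives
\[
\E[\sigma]^2 \le \E[\sigma^2/\pi]\,\E[\pi] \le b\,\E[\sigma^2/\pi],
\]
with equality iff $\pi\propto\sigma$. This confirms the minimum value $\E[\sigma]^2/b$.

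\textbf{Main obstacle.} The expected difficulty is rigor at the boundaries rather than the algebra.
\begin{itemize}
\item \emph{Overlap.} Where $\sigma(W)=0$ the optimizer sets $\pi=0$, which violates the positivity assumption the EIF needs. We handle this by noting that these points contribute nothing to the variance, so any $\pi$ there is optimal, and by taking $\pi\ge\epsilon$ with $\epsilon\downarrow 0$ as a limiting argument.
\item \emph{Truncation.} The KKT conditions must be verified to give the global optimum under truncation. Convexity makes the pointwise minimizer of the Lagrangian optimal once $\lambda$ matches the budget.
\end{itemize}

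We also remark that the result is stated for a fixed, known $\pi$. With estimated nuisances it holds asymptotically under the rate conditions already listed for the AIPW estimator.
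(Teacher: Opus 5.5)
Your argument is correct and follows the paper's route. The paper gives no derivation and only identifies the result as the Neyman allocation principle; your computation $\E[\phi^2]=\E[\sigma^2(W)/\pi(W)]+\Var(m(W))$, followed by pointwise minimization of the Lagrangian to get $\pi^*(W)=\min\{1,\sigma(W)/\sqrt{\lambda}\}$ with $\lambda$ fixed by the budget, is the standard derivation behind that citation, and your Cauchy--Schwarz check and treatment of the $\sigma(W)=0$ positivity issue fill in details the paper leaves implicit.
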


This is the Neyman allocation principle applied to decision evaluation \cite{neyman1934representative}.

\paragraph{Connection to margin-based routing.} For $f(W, O) = O_{\delta_{\text{judge}}(W)}$, we might hypothesize that the conditional variance $\Var(O_{\delta(W)} \mid W)$ is larger when:
\begin{itemize}[leftmargin=*]
    \item The judge margin is small (uncertain which candidate is best)
    \item Multiple candidates have similar judge scores (tie or near-tie)
\end{itemize}

If this held, margin-based routing would approximate the EIF-optimal design. However, this connection is \textbf{empirical, not guaranteed}: it requires that judge uncertainty actually predicts oracle outcome variance.

\paragraph{Empirical validation required.} Whether margin-based allocation achieves variance reduction depends on the data-generating process. In \Cref{sec:efficient}, we test this hypothesis and find that in our dataset, $\text{Corr}(\text{margin}, \text{oracle variance}) = 0.162$ (positive, not negative). Margin-based allocation does not reduce variance; oracle-optimal allocation (which uses the true conditional variance) achieves 45--77\% reduction. This highlights the gap between theoretical optimality and practical proxies.

\subsection{Non-Regularity of the Optimal Selector}

We emphasize that the above theory applies to evaluation of \emph{fixed} selectors $\delta$, not inference on the \emph{optimal} selector $\delta^* = \argmax_\delta V(\delta)$.

Inference on $\sup_{\delta \in \mathcal{D}} V(\delta)$ or the argmax policy is \textbf{non-regular} in fully nonparametric models unless:
\begin{itemize}[leftmargin=*]
    \item The policy class $\mathcal{D}$ is restricted (finite or parametric)
    \item A margin condition holds (no near-ties in oracle value)
    \item The objective is smoothed (softmax instead of argmax)
\end{itemize}

For this reason, we focus on evaluating the \emph{given} judge-based selector rather than claiming to find or do inference on an optimal routing rule.

\end{document}